\documentclass{article}

\PassOptionsToPackage{numbers, compress}{natbib}

\usepackage[final]{neurips_2023}

\usepackage[utf8]{inputenc} %
\usepackage[T1]{fontenc}    %
\usepackage{hyperref}       %
\usepackage{url}            %
\usepackage{booktabs}       %
\usepackage{amsfonts}       %
\usepackage{nicefrac}       %
\usepackage{microtype}      %
\usepackage{xcolor}         %
\usepackage{lipsum}
\usepackage{graphicx}
\usepackage{wrapfig}
\usepackage[size=small]{caption}
\usepackage{mathtools}
\usepackage{amssymb}
\usepackage{amsthm}
\usepackage{soul}

\usepackage[algoruled,boxed,lined,noend]{algorithm2e}
\newtheorem{theorem}{Theorem}

\newtheorem{proposition}{Proposition}
\newtheorem{lemma}{Lemma}
\newtheorem{definition}{Definition}
\setlength{\textfloatsep}{11.0pt plus 1.0pt minus 2.0pt}
\newcommand{\rev}[2]{{#2}}

\newenvironment{sproof}{%
  \proof}{\endproof}

\DeclareMathOperator*{\argmax}{arg\,max}
\newcommand{\piref}{\pi_\text{ref}}
\newcommand{\pisft}{\pi^\text{SFT}} %

\title{Direct Preference Optimization:\\ Your Language Model is Secretly a Reward Model}

\author{%
  Rafael Rafailov\thanks{Equal contribution; more junior authors listed earlier.}\,\,\footnotemark[2] \And Archit Sharma\footnotemark[1]\,\,\footnotemark[2] \And Eric Mitchell\footnotemark[1]\,\,\footnotemark[2] \\
  \AND Stefano Ermon\footnotemark[2]\,\,\footnotemark[3] \And Christopher D. Manning\footnotemark[2] \And Chelsea Finn\footnotemark[2] \\
  \AND \textnormal{\footnotemark[2]\,\,Stanford University \footnotemark[3]\,\;CZ Biohub} \\
  \texttt{\{rafailov,architsh,eric.mitchell\}@cs.stanford.edu}
}

\newcommand{\methodac}{DPO}
\newcommand{\methodfull}{Direct Preference Optimization}

\begin{document}

\maketitle

\begin{abstract}
While large-scale unsupervised language models (LMs) learn broad world knowledge and some reasoning skills, achieving precise control of their behavior is difficult due to the completely unsupervised nature of their training.
Existing methods for gaining such steerability collect human labels of the relative quality of model generations and fine-tune the unsupervised LM to align with these preferences, often with reinforcement learning from human feedback (RLHF).
However, RLHF is a complex and often unstable procedure, first fitting a reward model that reflects the human preferences, and then fine-tuning the large unsupervised LM using reinforcement learning to maximize this estimated reward without drifting too far from the original model.
\rev{In this paper, we leverage a mapping between reward functions and optimal policies to show that this constrained reward maximization problem can be \emph{optimized exactly} with a single stage of policy training, essentially solving a classification problem on the human preference data.}{In this paper we introduce a new parameterization of the reward model in RLHF that enables extraction of the corresponding optimal policy in closed form, allowing us to solve the standard RLHF problem with only a simple classification loss.}
The resulting algorithm, which we call \textit{Direct Preference Optimization} (DPO), is stable, performant, and computationally lightweight, eliminating the need for \rev{fitting a reward model,}{} sampling from the LM during fine-tuning or performing significant hyperparameter tuning.
Our experiments show that DPO can fine-tune LMs to align with human preferences as well as or better than existing methods. Notably, fine-tuning with DPO exceeds PPO-based RLHF in ability to control sentiment of generations, and matches or improves response quality in summarization and single-turn dialogue while being substantially simpler to implement and train.

\end{abstract}

\section{Introduction}
Large unsupervised language models (LMs) trained on very large datasets
acquire surprising capabilities~\citep{chowdhery2022palm, brown2020language, touvron2023llama,bubeck2023sparks}. However, these models are trained on data generated by humans with a wide variety of goals, priorities, and skillsets. Some of these goals and skillsets may not be desirable to imitate; for example, while we may want our AI coding assistant to \textit{understand} common programming mistakes in order to correct them, nevertheless, when generating code, we would like to bias our model toward the (potentially rare) high-quality coding ability present in its training data. Similarly, we might want our language model to be \textit{aware} of a common misconception believed by 50\% of people, but we certainly do not want the model to claim this misconception to be true in 50\% of queries about it! In other words, selecting the model's \emph{desired responses and behavior} from its very wide \textit{knowledge and abilities} is crucial to building AI systems that are safe, performant, and controllable \citep{ouyang2022training}. While existing methods typically steer LMs to match human preferences using reinforcement learning (RL), we will show that the RL-based objective used by existing methods can be optimized exactly with a simple binary cross-entropy objective, greatly simplifying the preference learning pipeline.

\begin{figure}
    \centering
    \includegraphics[width=0.999\textwidth]{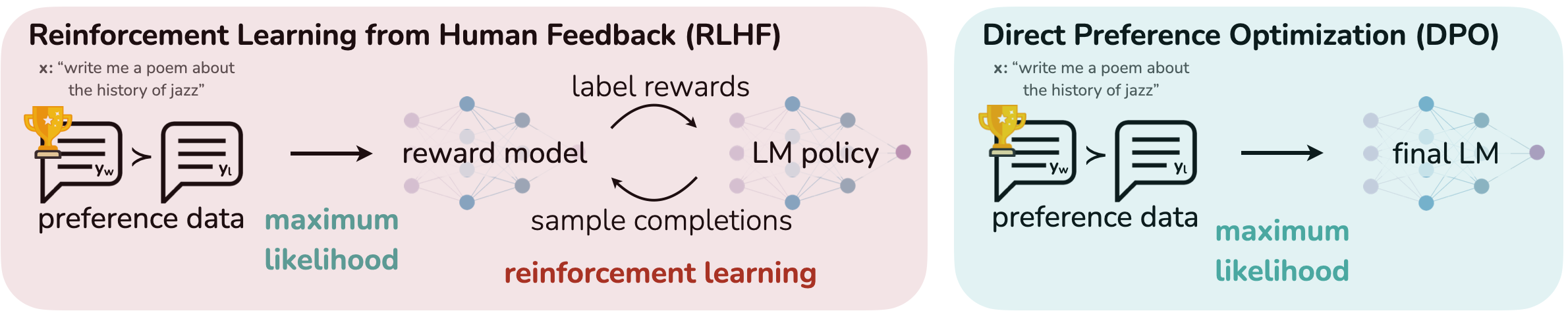}
    \caption{\textbf{{\methodac} optimizes for human preferences while avoiding reinforcement learning.} Existing methods for fine-tuning language models with human feedback first fit a reward model to a dataset of prompts and human preferences over pairs of responses, and then use RL to find a policy that maximizes the learned reward. In contrast, {\methodac} directly optimizes for the policy best satisfying the preferences with a simple classification objective, \rev{without an explicit standalone reward model or RL}{fitting an \textit{implicit} reward model whose corresponding optimal policy can be extracted in closed form}.}
    \vspace{-2mm}
    \label{fig:teaser}
\end{figure}

At a high level, existing methods instill the desired behaviors into a language model using curated sets of human preferences representing the types of behaviors that humans find safe and helpful. This preference learning stage occurs after an initial stage of large-scale unsupervised pre-training on a large text dataset. While the most straightforward approach to preference learning is supervised fine-tuning on human demonstrations of high quality responses, the most successful class of methods is reinforcement learning from human (or AI) feedback (RLHF/RLAIF; \citep{christiano2017deep,bai2022constitutional}). RLHF methods fit a reward model to a dataset of human preferences and then use RL to optimize a language model policy to produce responses assigned high reward without drifting excessively far from the original model. While RLHF produces models with impressive conversational and coding abilities, the RLHF pipeline is considerably more complex than supervised learning, involving training multiple LMs and sampling from the LM policy in the loop of training, incurring significant computational costs.

In this paper, we show how to directly optimize a language model to adhere to human preferences, without explicit reward modeling or reinforcement learning.
We propose
\textit{{\methodfull} (\methodac)}, an algorithm that implicitly optimizes the same objective as existing RLHF algorithms (reward maximization with a KL-divergence constraint) but is simple to implement and straightforward to train. Intuitively, the {\methodac} update increases the relative log probability of preferred to dispreferred responses, but it incorporates a dynamic, per-example importance weight that prevents the model degeneration that we find occurs with a naive probability ratio objective. Like existing algorithms, {\methodac} relies on a theoretical preference model (such as the Bradley-Terry model; \cite{bradley1952rankanalysis}) that measures how well a given reward function aligns with empirical preference data. However, while existing methods use the preference model to define a preference loss to train a reward model and then train a policy that optimizes the learned reward model, {\methodac} uses a change of variables to define the preference loss as a function of the policy directly. Given a dataset of human preferences over model responses, {\methodac} can therefore optimize a policy using a simple binary cross entropy objective, \rev{without learning an explicit, standalone reward model or sampling from the policy during training}{producing the optimal policy to an implicit reward function fit to the preference data}.

Our main contribution is {\methodfull} (\methodac), a simple RL-free algorithm for training language models from preferences. Our experiments show that {\methodac} is at least as effective as existing methods, including PPO-based RLHF, for learning from preferences in tasks such as sentiment modulation, summarization, and dialogue, using language models with up to 6B parameters.

\section{Related Work}

Self-supervised language models of increasing scale learn to complete some tasks zero-shot \citep{radford2019language} or with few-shot prompts \citep{gpt3,megatron,chowdhery2022palm}. However, their performance on downstream tasks and alignment with user intent can be significantly improved by fine-tuning on datasets of instructions and human-written completions \citep{mishra-etal-2022-cross,sanh2022multitask,chung2022scaling,thoppilan2022lamda}. This `instruction-tuning' procedure
enables LLMs to generalize to instructions outside of the instruction-tuning set and generally increase their usability \citep{chung2022scaling}. Despite the success of instruction tuning, \textit{relative} human judgments of response quality are often easier to collect than expert demonstrations, and thus subsequent works have fine-tuned LLMs with datasets of human preferences, improving proficiency in translation \citep{kreutzer-etal-2018-reliability}, summarization \citep{stiennon2022learning,ziegler2020finetuning}, story-telling \citep{ziegler2020finetuning}, and instruction-following \citep{ouyang2022training,ramamurthy2023is}. These methods first optimize a neural network reward function for compatibility with the dataset of preferences under a preference model such as the Bradley-Terry model \citep{bradley1952rankanalysis}, then fine-tune a language model to maximize the given reward using reinforcement learning algorithms, commonly REINFORCE \citep{williams1992reinforce}, proximal policy optimization (PPO; \cite{schulman2017proximal}), or variants \citep{ramamurthy2023is}. A closely-related line of work leverages LLMs fine-tuned for instruction following with human feedback to generate additional synthetic preference data for targeted attributes such as safety or harmlessness \citep{bai2022constitutional}, using only weak supervision from humans in the form of a text rubric for the LLM's annotations. These methods represent a convergence of two bodies of work: one body of work on training language models with reinforcement learning for a variety of objectives~\citep{Ranzato2015SequenceLT,paulus2018a,wu2018learning} and another body of work on general methods for learning from human preferences \citep{christiano2017deep,kupcsik2018learning}. 
Despite the appeal of using relative human preferences, fine-tuning large language models with reinforcement learning remains a major practical challenge; this work provides a theoretically-justified approach to optimizing relative preferences without RL.

Outside of the context of language, learning policies from preferences has been studied in both bandit and reinforcement learning settings, and several approaches have been proposed. Contextual bandit learning using preferences or rankings of actions, rather than rewards, is known as a contextual dueling bandit (CDB; \cite{yue2012karmed,dudik2015contextual}). In the absence of absolute rewards, theoretical analysis of CDBs substitutes the notion of an optimal policy with a \textit{von Neumann winner}, a policy whose expected win rate against \textit{any} other policy is at least 50\% \citep{dudik2015contextual}. However, in the CDB setting, preference labels are given online, while in learning from human preferences, we typically learn from a fixed batch of offline preference-annotated action pairs \citep{yan2022human}. Similarly, \textit{preference-based RL} (PbRL) learns from binary preferences generated by an \textit{unknown} `scoring' function rather than rewards \citep{BusaFekete2014,ruiz2023dueling}. Various algorithms for PbRL exist, including methods that can reuse off-policy preference data, but generally involve first explicitly estimating the latent scoring function (i.e. the reward model) and subsequently optimizing it \citep{jain2013learning,BusaFekete2014,christiano2017deep,sadigh2017active,kupcsik2018learning}. We instead present a single stage policy learning approach that directly optimizes a policy to satisfy preferences.

\section{Preliminaries}\label{section:prelims}

We review the RLHF pipeline in \citeauthor{ziegler2020finetuning} (and later \citep{stiennon2022learning, bai2022training, ouyang2022training}). It usually includes three phases: 1) supervised fine-tuning (SFT); 2) preference sampling and reward learning and 3) RL optimization.

\textbf{SFT}: RLHF typically begins by fine-tuning a pre-trained LM with supervised learning on high-quality data for the downstream task(s) of interest (dialogue, summarization, etc.), to obtain a model $\pisft$. 

\textbf{Reward Modelling Phase}: In the second phase the SFT model is prompted with prompts $x$ to produce pairs of answers $(y_1, y_2)\sim \pisft(y \mid x)$. These are then presented to human labelers who express preferences for one answer, denoted as $y_w\succ y_l \mid x$ where $y_w$ and $y_l$ denotes the preferred and dispreferred completion amongst $(y_1, y_2)$ respectively. The preferences are assumed to be generated by some latent reward model $r^*(y, x)$, which we do not have access to. There are a number of approaches used to model preferences, the Bradley-Terry (BT) \cite{bradley1952rankanalysis} model being a popular choice (although more general Plackett-Luce ranking models \citep{plackett1975analysis, luce2012individual} are also compatible with the framework if we have access to several ranked answers). The BT model stipulates that the human preference distribution $p^*$ can be written as:
\begin{equation}\label{eq:bradley-terry}
    p^*(y_1\succ y_2 \mid x)=\frac{\exp\left(r^*(x, y_1)\right)}{\exp\left(r^*(x, y_1)\right) + \exp\left(r^*(x, y_2)\right)}.
\end{equation}
Assuming access to a static dataset of comparisons $\mathcal{D}=\bigl\{x^{(i)}, y_w^{(i)}, y_l^{(i)}\bigr\}_{i=1}^N$ sampled from $p^*$, we can parametrize a reward model $r_{\phi}(x, y)$ and estimate the parameters via maximum likelihood. Framing the problem as a binary classification we have the negative log-likelihood loss:
\begin{equation}\label{eq:reward_model}
    \mathcal{L}_R(r_{\phi}, \mathcal{D}) = -\mathbb{E}_{(x, y_w, y_l)\sim \mathcal{D}}\bigl[\log \sigma(r_{\phi}(x, y_w)- r_{\phi}(x, y_l))\bigr]
\end{equation}
where $\sigma$ is the logistic function. In the context of LMs, the network $r_{\phi}(x, y)$ is often initialized from the SFT model $\pisft(y \mid x)$ with the addition of a linear layer on top of the final transformer layer that produces a single scalar prediction for the reward value \cite{ziegler2020finetuning}. To ensure a reward function with lower variance, prior works normalize the rewards, such that  $\mathbb{E}_{x,y\sim \mathcal{D}}\left[r_\phi(x, y)\right] = 0$ for all $x$.

\textbf{RL Fine-Tuning Phase}: During the RL phase, the learned reward function is used to provide feedback to the language model. Following prior works~\citep{jaques2017sequence, jaques2020human}, the optimization is formulated as
\begin{equation}\label{eq:RL}
\max_{\pi_{\theta}}  \mathbb{E}_{x\sim \mathcal{D}, y\sim \pi_{\theta}(y \mid x)}\bigl[r_{\phi}(x, y)\bigr] - \beta\mathbb{D}_{\textrm{KL}}\bigl[\pi_{\theta}(y\mid x)\mid \mid \piref(y\mid x)\bigr],
\end{equation}
where $\beta$ is a parameter controlling the deviation from the base reference policy $\piref$, namely the initial SFT model $\pisft$. 
In practice, the language model policy $\pi_\theta$ is also initialized to $\pisft$. The added constraint is important, as it prevents the model from deviating too far from the distribution on which the reward model is accurate, as well as maintaining the generation diversity and preventing mode-collapse to single high-reward answers. Due to the discrete nature of language generation, this objective is not differentiable and is typically optimized with reinforcement learning. The standard approach \citep{ziegler2020finetuning, stiennon2022learning, bai2022training, ouyang2022training} has been to construct the reward function ${r(x, y) = r_{\phi}(x, y) -\beta (\log \pi_{\theta}(y\mid x) - \log \piref(y\mid x))}$, and maximize using PPO \cite{schulman2017proximal}. 

\section{Direct Preference Optimization}\label{sec:DPO}

Motivated by the challenges of applying reinforcement learning algorithms on large-scale problems such as fine-tuning language models, our goal is to derive a simple approach for policy optimization using preferences directly. Unlike prior RLHF methods, which learn a reward and then optimize it via RL, our approach \rev{bypasses the reward modeling step and directly optimizes a language model using preference data}{leverages a particular choice of reward model parameterization that enables extraction of its optimal policy in closed form, without an RL training loop}. 
As we will describe next in detail, our key insight is to leverage an analytical mapping from reward functions to optimal policies, which enables us to transform a loss function over reward functions into a loss function over policies.
This change-of-variables approach \rev{allows us to skip the explicit reward modeling step}{avoids fitting an explicit, standalone reward model}, while still optimizing under existing models of human preferences, such as the Bradley-Terry model. In essence, the policy network represents both the language model and the \rev{}{(implicit)} reward.

\textbf{Deriving the DPO objective.} We start with the same RL objective as prior work, Eq.~\ref{eq:RL}, under a general reward function $r$. Following prior work~\citep{peters2007reinforcement, peng2019advantage, korbak2022reinforcement, go2023aligning}, it is straightforward to show that the optimal solution to the KL-constrained reward maximization objective in Eq.~\ref{eq:RL} takes the form:
\begin{equation}\label{eq:op_policy}
    \pi_r(y\mid x) = \frac{1}{Z(x)}\piref(y\mid x)\exp\left(\frac{1}{\beta}r(x, y)\right),
\end{equation}%
where $Z(x) =\sum_{y}\piref(y\mid x)\exp\left(\frac{1}{\beta}r(x, y)\right)$ is the partition function. See Appendix \ref{app:derivation1} for a complete derivation. Even if we use the MLE estimate $r_{\phi}$ of the ground-truth reward function $r^*$, it is still expensive to estimate the partition function $Z(x)$ \citep{korbak2022reinforcement, go2023aligning}, which makes this representation hard to utilize in practice. However, we can rearrange Eq.~\ref{eq:op_policy} to express the reward function in terms of its corresponding optimal policy $\pi_r$, the reference policy $\piref$, and the unknown partition function $Z(\cdot)$. Specifically, we first take the logarithm of both sides of Eq.~\ref{eq:op_policy} and then with some algebra we obtain:
\begin{equation}\label{eq:main_eq}
    r(x,y) =\beta \log \frac{\pi_r(y\mid x)}{\piref(y\mid x)} + \beta \log Z(x).
\end{equation}
We can apply this reparameterization to the ground-truth reward $r^*$ and corresponding optimal model $\pi^*$. Fortunately, the Bradley-Terry model depends only on the difference of rewards between two completions, i.e., ${p^*(y_1 \succ y_2 \mid x) = \sigma(r^*(x, y_1) - r^*(x, y_2))}$. Substituting the reparameterization in Eq.~\ref{eq:main_eq} for $r^*(x,y)$ into the preference model Eq.~\ref{eq:bradley-terry}, the partition function cancels, and we can express the human preference probability in terms of only the optimal policy $\pi^*$ and reference policy $\piref$. Thus, the optimal RLHF policy $\pi^*$ under the Bradley-Terry model satisfies the preference model:
\begin{equation}\label{eq:objective}
    p^*(y_1\succ y_2 \mid x)=\frac{1}{1 + \exp\left(\beta \log \frac{\pi^*(y_2\mid x)}{\piref(y_2\mid x)} - \beta \log \frac{\pi^*(y_1\mid x)}{\piref(y_1\mid x)}\right)}
\end{equation}
The derivation is in Appendix~\ref{app:derivation2}. While Eq.~\ref{eq:objective} uses the Bradley-Terry model, we can similarly derive expressions under the more general Plackett-Luce models~\citep{plackett1975analysis, luce2012individual}, shown in Appendix~\ref{app:plackett_luce_models}.

Now that we have 
the probability of human preference data in terms of the optimal policy rather than the reward model, we can formulate a maximum likelihood objective for a parametrized policy $\pi_\theta$. Analogous to the reward modeling approach (i.e. Eq.~\ref{eq:reward_model}), our policy objective becomes:
\begin{equation}\label{eq:optimum_model}
    \mathcal{L}_\text{DPO}(\pi_{\theta}; \piref) = -\mathbb{E}_{(x, y_w, y_l)\sim \mathcal{D}}\left[\log \sigma \left(\beta \log \frac{\pi_{\theta}(y_w\mid x)}{\piref(y_w\mid x)} - \beta \log \frac{\pi_{\theta}(y_l\mid x)}{\piref(y_l\mid x)}\right)\right].
\end{equation}
\rev{This way, we simultaneously bypass the explicit reward modeling step while also avoiding the need to perform reinforcement learning optimization.}{This way, we fit an implicit reward using an alternative parameterization, whose optimal policy is simply $\pi_\theta$.} Moreover, since our procedure is equivalent to fitting a reparametrized Bradley-Terry model, it enjoys certain theoretical properties, such as consistencies under suitable assumption of the preference data distribution \cite{bong2022generalized}. In Section~\ref{sec:theory}, we further discuss theoretical properties of DPO in relation to other works.

\textbf{What does the DPO update do?} For a mechanistic understanding of DPO, it is useful to analyze the gradient of the loss function $\mathcal{L}_\text{DPO}$. The gradient with respect to the parameters $\theta$ can be written as:
\begin{multline*}\label{eq:gradient}
    \nabla_\theta \mathcal{L}_\text{DPO}(\pi_\theta;\piref) = \\ -\beta\mathbb{E}_{(x, y_w, y_l) \sim \mathcal{D}} \bigg[\underbrace{\sigma(\hat{r}_\theta(x, y_l) - \hat{r}_\theta (x, y_w))}_\text{higher weight when reward estimate is wrong}\bigg[\underbrace{\nabla_\theta\log \pi(y_w \mid x)}_\text{increase likelihood of $y_w$} - \underbrace{\nabla_\theta\log\pi(y_l \mid x)}_\text{decrease likelihood of $y_l$}\bigg]\bigg],
\end{multline*}
where $\hat{r}_\theta(x, y) = \beta \log \frac{\pi_\theta(y \mid x)}{\piref(y \mid x)}$ is the reward implicitly defined by the language model $\pi_\theta$ and reference model $\piref$ (more in Section~\ref{sec:theory}). Intuitively, the gradient of the loss function $\mathcal{L}_\text{DPO}$ increases the likelihood of the preferred completions $y_w$ and decreases the likelihood of dispreferred completions $y_l$. Importantly, the examples are weighed by how much higher the implicit reward model $\hat{r}_\theta$ rates the dispreferred completions, scaled by $\beta$, i.e, how incorrectly the implicit reward model orders the completions, accounting for the strength of the KL constraint. Our experiments suggest the importance of this weighting, as a na\"ive version of this method without the weighting coefficient can cause the language model to degenerate (Appendix Table~\ref{tab:unlikelihood_generations}).

\textbf{DPO outline.} 
The general DPO pipeline is as follows: 1) Sample completions $y_1, y_2 \sim \piref(\cdot \mid x)$ for every prompt $x$, label with human preferences to construct the offline dataset of preferences $\mathcal{D} = \{x^{(i)}, y_w^{(i)}, y_l)^{(i)}\}_{i=1}^N$ and 2) optimize the language model $\pi_\theta$ to minimize $\mathcal{L}_\text{DPO}$ for the given $\piref$ and $\mathcal{D}$ and desired $\beta$. 
In practice, one would like to reuse preference datasets publicly available, rather than generating samples and gathering human preferences. Since the preference datasets are sampled using $\pisft$, we initialize $\piref = \pisft$ whenever available. However, when $\pisft$ is not available, we initialize $\piref$ by maximizing likelihood of preferred completions ${(x, y_w)}$, that is, ${\piref = \argmax_{\pi}\mathbb{E}_{x, y_w \sim \mathcal{D}}\left[\log \pi(y_w \mid x)\right]}$. This procedure helps mitigate the distribution shift between the true reference distribution which is unavailable, and $\piref$ used by DPO. Further details related to the implementation and hyperparameters can be found in Appendix~\ref{app:implementation}.

\section{Theoretical Analysis of DPO}
In this section, we give further interpretation of the DPO method, provide theoretical backing, and relate advantages of DPO to issues with actor critic algorithms used for RLHF (such as PPO~\cite{schulman2017proximal}).

\label{sec:theory}

\subsection{Your Language Model Is Secretly a Reward Model} DPO is able to bypass both \rev{explicit reward estimation}{fitting an explicit reward} and performing RL to learn the policy using a single maximum likelihood objective. Note the optimization objective Eq. \ref{eq:main_eq} is equivalent to a Bradley-Terry model with a reward parameterization $r^*(x, y) = \beta \log\frac{\pi^*_\theta(y \mid x)}{\piref(y \mid x)}$ and we optimize our parametric model $\pi_{\theta}$, equivalently to the reward model optimization in Eq. \ref{eq:reward_model} under the change of variables. In this section we will build the theory behind this reparameterization, show that it does not constrain the class of learned reward models, and allows for the exact recovery of the optimal policy. We begin with by defining an equivalence relation between reward functions. 

\begin{definition}
We say that two reward functions $r(x, y)$ and $r'(x, y)$ are equivalent iff ${r(x, y)-r'(x, y) = f(x)}$ for some function $f$.     
\end{definition}
It is easy to see that this is indeed an equivalence relation, which partitions the set of reward functions into classes. We can state the following two lemmas:

\begin{lemma}\label{lemma:same_prefrence} Under the Plackett-Luce, and in particular the Bradley-Terry, preference framework, two reward functions from the same class induce the same preference distribution.
\end{lemma}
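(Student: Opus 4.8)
The plan is to show directly that the Plackett-Luce preference probability is invariant under the transformation $r(x,y) \mapsto r(x,y) + f(x)$. The Plackett-Luce model assigns to a ranking $\tau$ of $K$ completions $y_1,\dots,y_K$ given prompt $x$ the probability
\begin{equation*}
p_r(\tau \mid y_1,\dots,y_K,x) = \prod_{k=1}^{K} \frac{\exp(r(x,y_{\tau(k)}))}{\sum_{j=k}^{K}\exp(r(x,y_{\tau(j)}))}.
\end{equation*}
First I would substitute $r'(x,y) = r(x,y) + f(x)$ into this expression and observe that every factor $\exp(r'(x,y)) = \exp(f(x))\exp(r(x,y))$ picks up a common multiplicative constant $\exp(f(x))$. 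In the $k$-th factor there is exactly one such constant in the numerator and exactly $(K-k+1)$ of them in the denominator sum, but since the constant does not depend on the summation index $j$ it factors out of the sum, leaving $\exp(f(x))$ in both numerator and denominator, which cancels.

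The key steps, in order: (i) state the Plackett-Luce ranking probability explicitly (specializing to $K=2$ recovers the Bradley-Terry form of Eq.~\ref{eq:bradley-terry}); (ii) take two equivalent reward functions $r$ and $r'$, so $r'(x,y) - r(x,y) = f(x)$ by definition; (iii) write $\exp(r'(x,y_{\tau(j)})) = \exp(f(x))\cdot\exp(r(x,y_{\tau(j)}))$ and pull $\exp(f(x))$ out of each numerator and out of each denominator summation; (iv) cancel the common factor $\exp(f(x))$ in every one of the $K$ factors of the product, concluding $p_{r'}(\tau\mid\cdots) = p_r(\tau\mid\cdots)$. Since this holds for every ranking $\tau$ and every set of completions, the two induced preference distributions coincide.

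I do not expect any genuine obstacle here — the result is essentially the observation that the softmax (and its Plackett-Luce generalization) is shift-invariant in its logits when the shift is constant across the items being compared. The only thing to be slightly careful about is bookkeeping: making sure the prompt-dependent offset $f(x)$ really is constant across all the completions appearing in a given comparison (which it is, since it depends only on $x$), so that it can legitimately be pulled out of the normalizing sums. For the Bradley-Terry special case one can alternatively just note $p^*(y_1\succ y_2\mid x) = \sigma(r(x,y_1) - r(x,y_2))$ and that the difference $r(x,y_1) - r(x,y_2) = r'(x,y_1) - r'(x,y_2)$ is unchanged, which is the one-line version of the same argument.
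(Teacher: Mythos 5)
Your proposal is correct and matches the paper's own proof essentially step for step: both substitute $r'(x,y)=r(x,y)+f(x)$ into the Plackett-Luce ranking probability, factor $\exp(f(x))$ out of the numerator and each denominator sum, and cancel it in every factor, with Bradley-Terry recovered as the $K=2$ case. No gaps; the one-line sigmoid observation for the pairwise case is a fine (equivalent) shortcut.
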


\begin{lemma}\label{lemma:same_policy}
    Two reward functions from the same equivalence class induce the same optimal policy under the constrained RL problem.
\end{lemma}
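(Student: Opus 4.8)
The plan is to read off the optimal policy for each reward function from the closed form in Eq.~\ref{eq:op_policy} and observe that the member of the equivalence class, i.e.\ the $x$-only offset $f(x)$, cancels. First I would recall that for any reward function $r$ the unique maximizer of the KL-constrained objective in Eq.~\ref{eq:RL} is
\[
\pi_r(y\mid x) = \frac{1}{Z_r(x)}\piref(y\mid x)\exp\!\left(\tfrac{1}{\beta}r(x,y)\right), \qquad Z_r(x) = \sum_{y}\piref(y\mid x)\exp\!\left(\tfrac{1}{\beta}r(x,y)\right),
\]
which is exactly Eq.~\ref{eq:op_policy} (derivation in Appendix~\ref{app:derivation1}); uniqueness holds because the objective decomposes over $x$ and, for each $x$, is strictly concave in $\pi(\cdot\mid x)$ (the KL term is strictly convex on the simplex), so ``the optimal policy'' is well defined.

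Next, given $r'$ with $r'(x,y) = r(x,y) - f(x)$ in the same class, I would substitute into this formula. The key observation is that $\exp\!\left(\tfrac{1}{\beta}r'(x,y)\right) = \exp\!\left(-\tfrac{1}{\beta}f(x)\right)\exp\!\left(\tfrac{1}{\beta}r(x,y)\right)$, and the factor $\exp\!\left(-\tfrac{1}{\beta}f(x)\right)$ depends only on $x$, not on $y$. Hence it pulls out of the normalizing sum as well, giving $Z_{r'}(x) = \exp\!\left(-\tfrac{1}{\beta}f(x)\right)Z_r(x)$, so the offset cancels between numerator and denominator and $\pi_{r'}(y\mid x) = \pi_r(y\mid x)$ for all $x,y$, which is the claim.

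There is essentially no hard step; the only things to be careful about are citing the derivation of Eq.~\ref{eq:op_policy} so the closed form is justified, and invoking uniqueness of the maximizer so that the statement is meaningful. As an alternative self-contained route that avoids relying on the closed form, I could instead argue directly from Eq.~\ref{eq:RL}: since $\mathbb{E}_{y\sim\pi(\cdot\mid x)}[f(x)] = f(x)$ for \emph{every} $\pi$, replacing $r$ by $r'$ changes the objective value of every policy by the same $\pi$-independent amount $-\mathbb{E}_{x\sim\mathcal{D}}[f(x)]$, so the two problems have identical argmax. Either way the argument is short; the cancellation computation above is the cleaner presentation.
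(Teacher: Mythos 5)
Your proposal is correct and follows essentially the same route as the paper's proof: substitute the shifted reward into the closed form of Eq.~\ref{eq:op_policy} and observe that the $x$-only factor $\exp\bigl(\pm\tfrac{1}{\beta}f(x)\bigr)$ cancels between the numerator and the partition function, so $\pi_{r'} = \pi_r$. Your remarks on uniqueness of the maximizer and the alternative argument (the shift changes every policy's objective value by the same $\pi$-independent constant) are sound additions but not needed beyond what the paper does.
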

The proofs are straightforward and we defer them to Appendix \ref{app:lemma1}. The first lemma is a well-known under-specification issue with the Plackett-Luce family of models \cite{plackett1975analysis}. Due to this under-specification, we usually have to impose additional identifiability constraints to achieve any guarantees on the MLE estimates from Eq. \ref{eq:reward_model} \cite{bong2022generalized}. The second lemma states that all reward functions from the same class yield the same optimal policy, hence for our final objective, we are only interested in recovering an arbitrary reward function from the optimal class. We prove the following Theorem in Appendix~\ref{app:thm1}:
\begin{theorem}\label{thm:main}
    Under mild assumptions, all reward classes consistent with the Plackett-Luce (and Bradley-Terry in particular) models can be represented with the reparameterization ${r(x, y) = \beta \log \frac{\pi(y\mid x)}{\piref(y\mid x)}}$ for some model $\pi(y\mid x)$ and a given reference model $\piref(y \mid x)$.
\end{theorem}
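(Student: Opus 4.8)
The plan is to exhibit, inside every equivalence class, a representative of the advertised form, produced by a canonical ``projection'' of an arbitrary reward via its optimal policy. Fix the reference model $\piref$ and let $r(x,y)$ be any reward function. Following Eq.~\ref{eq:op_policy}, I would form its KL-constrained optimal policy
\begin{equation*}
  \pi_r(y\mid x) = \frac{1}{Z(x)}\,\piref(y\mid x)\exp\!\left(\tfrac{1}{\beta}r(x,y)\right),
\end{equation*}
with $Z(x)=\sum_{y}\piref(y\mid x)\exp(r(x,y)/\beta)$ the partition function, and then invoke Eq.~\ref{eq:main_eq} (take logs and rearrange) to write $r(x,y)=\beta\log\frac{\pi_r(y\mid x)}{\piref(y\mid x)}+\beta\log Z(x)$.

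Next I would define the projected reward $r'(x,y):=r(x,y)-\beta\log Z(x)$. Because $\beta\log Z(x)$ depends on $x$ only, $r$ and $r'$ differ by a function of $x$, so $r'$ belongs to the same equivalence class as $r$ (and, by Lemmas~\ref{lemma:same_prefrence} and~\ref{lemma:same_policy}, induces the same Plackett--Luce/Bradley--Terry preferences and the same optimal policy). But by construction $r'(x,y)=\beta\log\frac{\pi_r(y\mid x)}{\piref(y\mid x)}$, i.e.\ $r'$ has exactly the reparameterized form with model $\pi=\pi_r$. Since every reward function is consistent with the Plackett--Luce model, this shows every reward class contains such a representative, which is the theorem. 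If one additionally wants uniqueness of that representative within a class (the basis for the closed-form policy-recovery corollary), suppose $\beta\log\frac{\pi(y\mid x)}{\piref(y\mid x)}$ and $\beta\log\frac{\pi'(y\mid x)}{\piref(y\mid x)}$ differ by some $f(x)$; exponentiating gives $\pi(y\mid x)=\pi'(y\mid x)e^{f(x)/\beta}$, and summing over $y$ forces $e^{f(x)/\beta}=1$, hence $f\equiv 0$ and $\pi=\pi'$.

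The only real obstacle --- and the content of the ``mild assumptions'' --- is making $\pi_r$ a bona fide distribution, which requires $0<Z(x)<\infty$ for every $x$: it suffices that $\piref(\cdot\mid x)$ has full support and that $\exp(r(x,\cdot)/\beta)$ is summable (resp.\ integrable) against $\piref$, which holds automatically for a finite vocabulary and bounded generation length, or whenever $r$ is bounded. Granting that, $\pi_r$ is strictly positive, the logarithms are well-defined, and the argument above goes through; everything else is the algebra already performed in Eqs.~\ref{eq:op_policy}--\ref{eq:main_eq} together with the two lemmas.
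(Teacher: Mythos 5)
Your proposal is correct and follows essentially the same route as the paper's own proof: project an arbitrary reward $r$ to $r'(x,y)=r(x,y)-\beta\log Z(x)$, observe that this stays in the same equivalence class since the shift depends only on $x$, and use Eq.~\ref{eq:main_eq} to identify $r'(x,y)=\beta\log\frac{\pi_r(y\mid x)}{\piref(y\mid x)}$, with the ``mild assumptions'' being exactly $\piref>0$, $\beta>0$, and well-defined $Z(x)$. Your uniqueness addendum likewise matches the paper's Proposition~\ref{prop:unique} verbatim in spirit.
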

\begin{sproof}
    Consider any reward function $r(x, y)$, which induces a corresponding optimal model $\pi_r(y \mid x)$, specified by Eq. \ref{eq:op_policy}. We will show that a reward function from the equivalence class of $r$ can be represented using the reparameterization given above. We define the projection $f$ as  
\begin{equation}
    f(r; \piref, \beta)(x, y) = r(x, y) - \beta\log\sum_{y}\piref(y\mid x)\exp\left(\frac{1}{\beta}r(x, y)\right)
\end{equation}
The operator $f$ simply normalizes the reward function with the logarithm of the partition function of $\pi_r$. Since the added normalization term is only a function of the prefix $x$, $f(r; \piref, \beta)(x, y) $ is a reward function in the equivalence class of $r(x, y)$. Finally, replacing $r$ with the RHS of Eq.~\ref{eq:main_eq} (which holds for any reward function), we have $f(r; \piref, \beta)(x, y) = \beta \log \frac{\pi_r(y\mid x)}{\piref(y\mid x)}$. That is, the projection $f$ produces a member of the equivalence class of $r$ with the desired form, and we do not lose any generality in our reward model from the proposed reparameterization.
\end{sproof}
We can alternatively view Theorem~\ref{thm:main} as specifying exactly which reward function within each equivalence class the DPO reparameterization selects, that is, the reward function satisfying:
\begin{equation}\label{eq:lag_p}
     \sum_{y}\underbrace{\piref(y\mid x)\exp\left(\frac{1}{\beta}r(x, y)\right)}_{=\pi(y\mid x)\text{, using Thm.~\ref{thm:main} reparam.}} = 1,
\end{equation}
i.e., $\pi(y\mid x)$ is a valid distribution (probabilities are positive and sum to 1).
However, following Eq.~\ref{eq:op_policy}, we can see that Eq.~\ref{eq:lag_p} is the partition function of the optimal policy induced by the reward function $r(x, y)$.
The key insight of the DPO algorithm is that we can impose certain constraints on the under-constrained Plackett-Luce (and Bradley-Terry in particular) family of preference models, such that we preserve the class of representable reward models, but explicitly make the optimal policy in Eq. \ref{eq:op_policy} analytically tractable for all prompts $x$.

\subsection{Instability of Actor-Critic Algorithms}
We can also use our framework to diagnose instabilities with standard actor-critic algorithms used for the RLHF, such as PPO. We follow the RLHF pipeline and focus on the RL fine-tuning step outlined in Section \ref{section:prelims}. We can draw connections to the control as inference framework \cite{levine2018reinforcement} for the constrained RL problem outlined in \ref{eq:RL}. We assume a parameterized model $\pi_{\theta}(y\mid x)$ and minimize $\mathbb{D}_{\text{KL}}[\pi_{\theta}(y|x) \mid \mid \pi^*(y\mid x)]$ where $\pi^*$ is the optimal policy from Eq. \ref{eq:optimum_model} induced by the reward function $r_{\phi}(y, x)$. With some algebra this leads to the optimization objective:
\begin{equation}\label{eq:AC}
    \max_{\pi_{\theta}}\mathbb{E}_{\pi_{\theta}(y\mid x)}\bigg[\underbrace{r_{\phi}(x, y) -\beta\log\sum_{y}\piref(y\mid x)\exp\left(\frac{1}{\beta}r_{\phi}(x, y)\right)}_{f(r_{\phi}, \piref, \beta)} - \underbrace{\beta\log\frac{\pi_{\theta}(y\mid x)}{\piref(y\mid x)}}_{\text{KL}}\bigg]
\end{equation}
This is the same objective optimized in prior works 
\citep{ziegler2020finetuning, stiennon2022learning, bai2022training, ouyang2022training} using the DPO-equivalent reward for the reward class of $r_{\phi}$. In this setting, we can interpret the normalization term in $f(r_{\phi}, \piref, \beta)$ as the soft value function of the reference policy $\piref$. While this term does not affect the optimal solution, without it, the policy gradient of the objective could have high variance, making learning unstable. We can accommodate for the normalization term using a learned value function, but that can also be difficult to optimize. Alternatively, prior works have normalized rewards using a human completion baseline, essentially a single sample Monte-Carlo estimate of the normalizing term. In contrast the DPO reparameterization yields a reward function that does not require any baselines.

\section{Experiments}
In this section, we empirically evaluate DPO's ability to train policies directly from preferences. First, in a well-controlled text-generation setting, we ask: how efficiently does DPO trade off maximizing reward and minimizing KL-divergence with the reference policy, compared to common preference learning algorithms such as PPO? Next, we evaluate DPO's performance on larger models and more difficult RLHF tasks, including summarization and dialogue. We find that with almost no tuning of hyperparameters, DPO tends to perform as well or better than strong baselines like RLHF with PPO as well as returning the best of $N$ sampled trajectories under a learned reward function. Before presenting these results, we describe the experimental set-up; additional details are in Appendix~\ref{app:exp_details}.

\textbf{Tasks.} Our experiments explore three different open-ended text generation tasks. For all experiments, algorithms learn a policy from a dataset of preferences $\mathcal{D}=\bigl\{x^{(i)}, y_w^{(i)}, y_l^{(i)}\bigr\}_{i=1}^N$. In \textbf{controlled sentiment generation}, $x$ is a prefix of a movie review from the IMDb dataset \cite{maas-EtAl:2011:ACL-HLT2011}, and the policy must generate $y$ with positive sentiment. In order to perform a controlled evaluation, for this experiment we \textit{generate} preference pairs over generations using a pre-trained sentiment classifier, where $p(\text{positive}\mid x,y_w)>p(\text{positive}\mid x,y_l)$. For SFT, we fine-tune GPT-2-large until convergence on reviews from the train split of the IMDB dataset (further details in App~\ref{app:sentiment_details}). In \textbf{summarization}, $x$ is a forum post from Reddit; the policy must generate a summary $y$ of the main points in the post. Following prior work, we use the Reddit TL;DR summarization dataset \citep{volske-etal-2017-tl} along with human preferences gathered by \citeauthor{stiennon2022learning}. We use an SFT model fine-tuned on human-written forum post summaries\footnote{\url{https://huggingface.co/CarperAI/openai_summarize_tldr_sft}} with the TRLX \citep{leandro_von_werra_2023_7790115} framework for RLHF. The human preference dataset was gathered by \citeauthor{stiennon2022learning} on samples from a different, but similarly-trained, SFT model. Finally, in \textbf{single-turn dialogue}, 
$x$ is a human query, which may be anything from a question about astrophysics to a request for relationship advice. A policy must produce an engaging and helpful response $y$ to a user's query; we use the Anthropic Helpful and Harmless dialogue dataset \citep{bai2022training}, containing 170k dialogues between a human and an automated assistant. Each transcript ends with a pair of responses generated by a large (although unknown) language model along with a preference label denoting the human-preferred response. In this setting, no pre-trained SFT model is available; we therefore fine-tune an off-the-shelf language model on only the preferred completions to form the SFT model.

\begin{figure}
    \centering
    \includegraphics[width=0.50\textwidth]{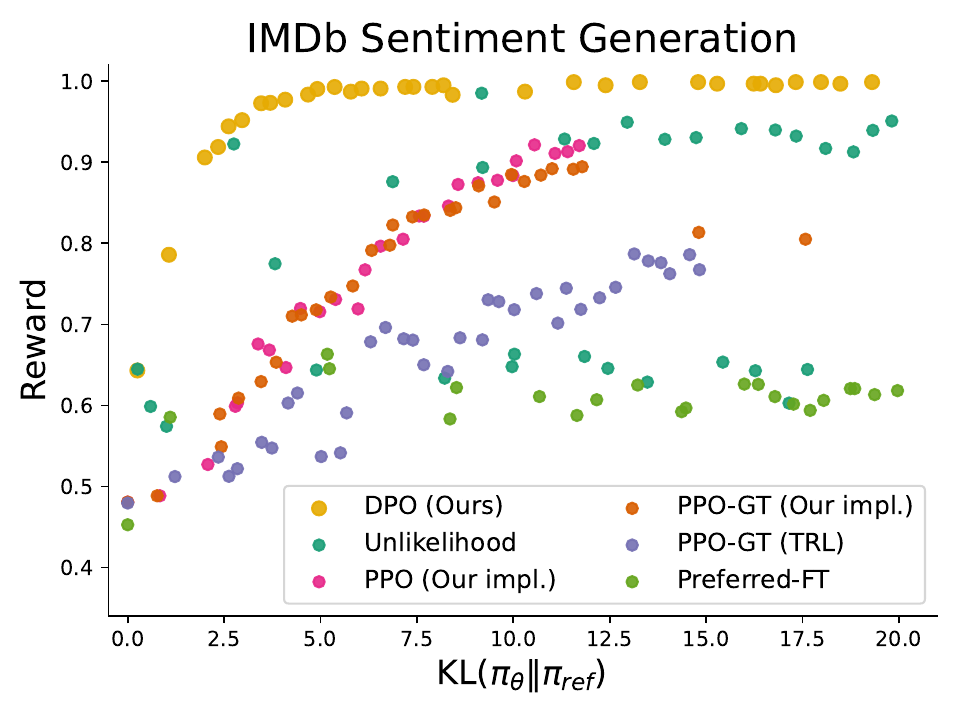}
    \includegraphics[width=0.49\textwidth]{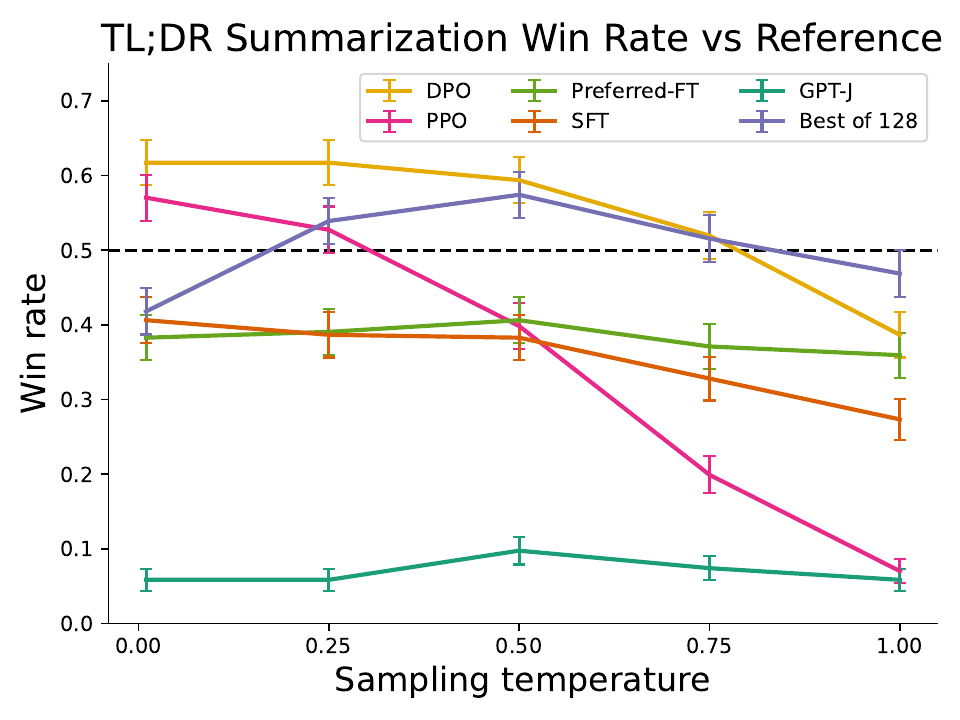}
    \caption{\textbf{Left.} The frontier of expected reward vs KL to the reference policy. DPO provides the highest expected reward for all KL values, demonstrating the quality of the optimization. \textbf{Right.} TL;DR summarization win rates vs. human-written summaries, using GPT-4 as evaluator. DPO exceeds PPO's best-case performance on summarization, while being more robust to changes in the sampling temperature.}
    \vspace{-2mm}
    \label{fig:frontier-tldr-main}
\end{figure}

\textbf{Evaluation.} Our experiments use two different approaches to evaluation. In order to analyze the effectiveness of each algorithm in optimizing the constrained reward maximization objective, in the controlled sentiment generation setting we evaluate each algorithm by its frontier of achieved reward and KL-divergence from the reference policy; this frontier is computable because we have acccess to the ground-truth reward function (a sentiment classifier). However, in the real world, the ground truth reward function is not known; therefore, we evaluate algorithms with their \textit{win rate} against a baseline policy, using GPT-4 as a proxy for human evaluation of summary quality and response helpfulness in the summarization and single-turn dialogue settings, respectively. For summarization, we use reference summaries in the test set as the baseline; for dialogue, we use the preferred response in the test dataset as the baseline. While existing studies suggest LMs can be better automated evaluators than existing metrics \citep{Chen2023ExploringTU}, we conduct a human study to justify our usage of GPT-4 for evaluation in Sec.~\ref{sec:human-judgments}. We find GPT-4 judgments correlate strongly with humans, with human agreement with GPT-4 typically similar or higher than inter-human annotator agreement.

\textbf{Methods.} In addition to DPO, we evaluate several existing approaches to training language models to adhere to human preferences. Most simply, we explore zero-shot prompting with \textbf{GPT-J} \citep{gpt-j} in the summarization task and 2-shot prompting with \textbf{Pythia-2.8B} \citep{biderman2023pythia} in the dialogue task. In addition, we evaluate the \textbf{SFT} model as well as \textbf{Preferred-FT}, which is a model fine-tuned with supervised learning on the chosen completion $y_w$ from either the SFT model (in controlled sentiment and summarization) or a generic LM (in single-turn dialogue). Another pseudo-supervised method is \textbf{Unlikelihood}~\citep{welleck2019neural}, which simply optimizes the policy to maximize the probability assigned to $y_w$ and \textit{minimize} the probability assigned to $y_l$; we use an optional coefficient $\alpha\in[0,1]$ on the `unlikelihood' term. We also consider \textbf{PPO} \citep{schulman2017proximal} using a reward function learned from the preference data and \textbf{PPO-GT}, which is an oracle that learns from the ground truth reward function available in the controlled sentiment setting. In our sentiment experiments, we use two implementations of PPO-GT, one of-the-shelf version \cite{leandro_von_werra_2023_7790115} as well as a modified version that normalizes rewards and further tunes hyperparameters to improve performance (we also use these modifications when running `normal' PPO with learned rewards). Finally, we consider the \textbf{Best of $N$} baseline, sampling $N$ responses from the SFT model (or Preferred-FT in dialogue) and returning the highest-scoring response according to a reward function learned from the preference dataset. This high-performing method decouples the quality of the reward model from the PPO optimization, but is computationally impractical even for moderate $N$ as it requires sampling $N$ completions for every query at test time.

\subsection{How well can DPO optimize the RLHF objective?}

\begin{figure}
    \centering
    \includegraphics[width=0.50\textwidth]{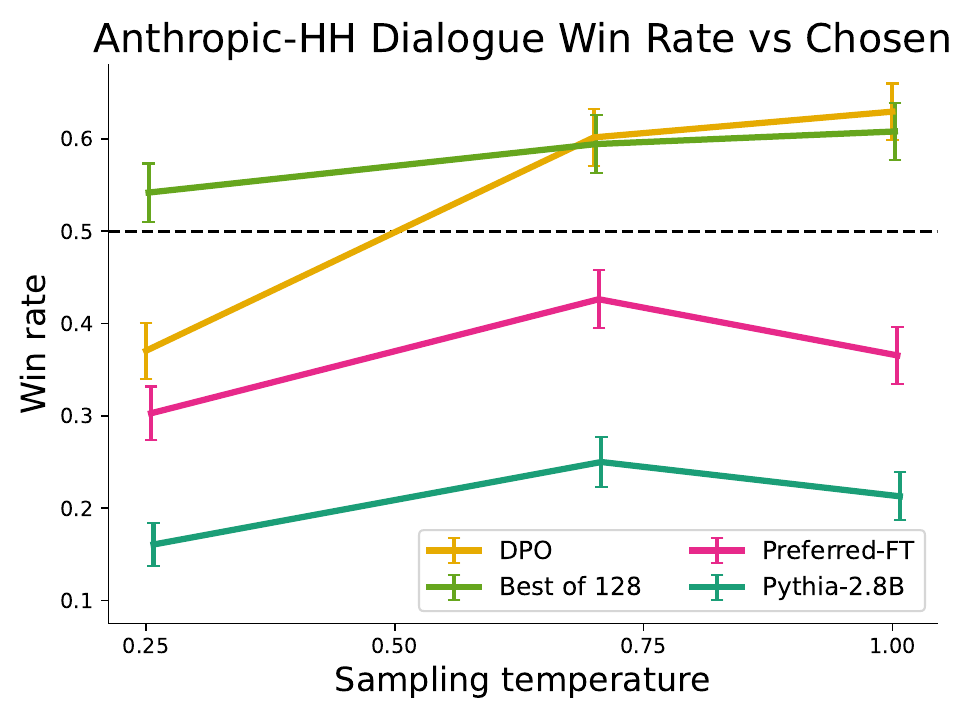}
    \includegraphics[width=0.49\textwidth]{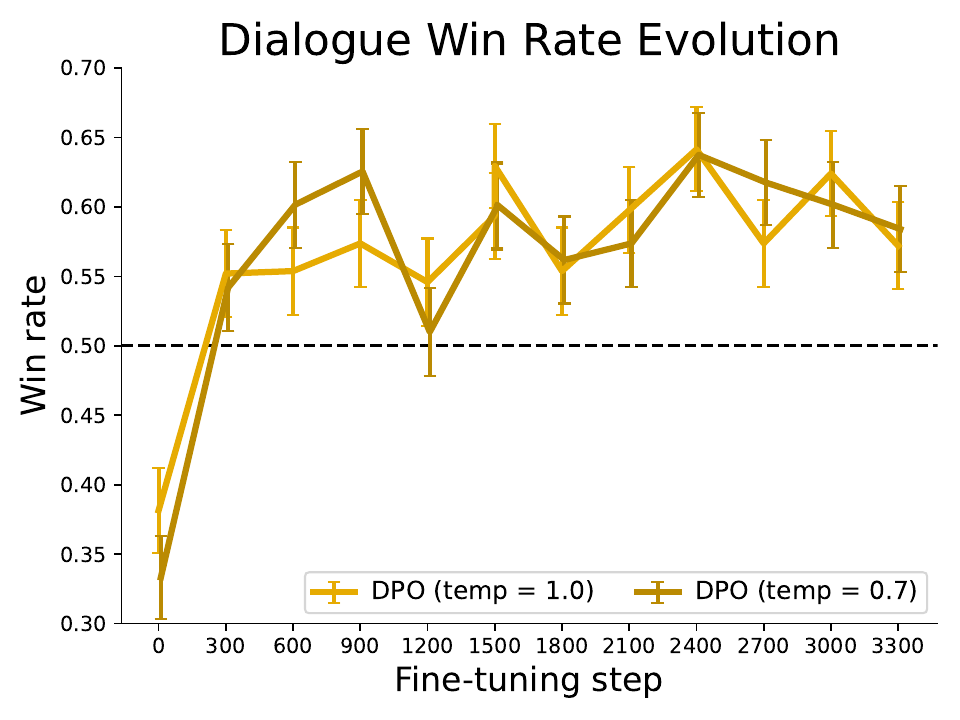}
    \caption{\textbf{Left.} Win rates computed by GPT-4 for Anthropic-HH one-step dialogue; DPO is the only method that improves over chosen summaries in the Anthropic-HH test set. \textbf{Right.} Win rates for different sampling temperatures over the course of training. DPO's improvement over the dataset labels is fairly stable over the course of training for different sampling temperatures.}
    \vspace{-2mm}
    \label{fig:dialogue-main}
\end{figure}

The KL-constrained reward maximization objective used in typical RLHF algorithms balances exploitation of reward while restricting the policy from deviating far from the reference policy. Therefore, when comparing algorithms, we must take into account both reward achieved as well as the KL discrepancy; achieving slightly higher reward but with much higher KL is not necessarily desirable. Figure~\ref{fig:frontier-tldr-main} shows the reward-KL frontier for various algorithms in the sentiment setting. We execute multiple training runs for each algorithm, using a different hyperparameter for policy conservativeness in each run (target KL $\in\{3,6,9,12\}$ for PPO, $\beta \in \{0.05,0.1,1,5\}$, $\alpha\in\{0.05,0.1,0.5,1\}$ for unlikelihood, random seeds for preferred-FT). This sweep includes 22 runs in total. After each 100 training steps until convergence, we evaluate each policy on a set of test prompts, computing the average reward under the true reward function as well as the average sequence-level KL\footnote{That is, the sum of the per-timestep KL-divergences.} with the reference policy $\text{KL}\left(\pi\mid \mid \piref\right)$. We find that DPO produces by far the most efficient frontier, achieving the highest reward while still achieving low KL. This result is particularly notable for multiple reasons. First, DPO and PPO optimize the same objective, but DPO is notably more efficient; DPO's reward/KL tradeoff strictly dominates PPO. Second, DPO achieves a better frontier than PPO, \emph{even when PPO can access ground truth rewards} (PPO-GT).

\subsection{Can DPO scale to real preference datasets?}
\label{sec:dpo-real-datasets}
Next, we evaluate fine-tuning performance of DPO on summarization and single-turn dialogue. For summarization, 
automatic evaluation metrics such as ROUGE can be poorly correlated with human preferences~\citep{stiennon2022learning}, and prior work has found that fine-tuning LMs using PPO on human preferences to provide more effective summaries. We evaluate different methods by sampling completions on the test split of TL;DR summarization dataset, and computing the average win rate against reference completions in the test set. The completions for all methods are sampled at temperatures varying from 0.0 to 1.0, and the win rates are shown in Figure~\ref{fig:frontier-tldr-main} (right). DPO, PPO and Preferred-FT all fine-tune the same GPT-J SFT model\footnote{\url{https://huggingface.co/CarperAI/openai_summarize_tldr_sft}}. We find that DPO has a win rate of approximately 61\% at a temperature of 0.0, exceeding the performance of PPO at ~57\% at its optimal sampling temperature of 0.0. DPO also achieves a higher maximum win rate compared to the best of $N$ baseline. We note that we did not meaningfully tune DPO's $\beta$ hyperparameter, so these results may underestimate DPO's potential. Moreover, we find DPO to be much more robust to the sampling temperature than PPO, the performance of which can degrade to that of the base GPT-J model at high temperatures. Preferred-FT does not improve significantly over the SFT model. We also compare DPO and PPO head-to-head in human evaluations in Section~\ref{sec:human-judgments}, where DPO samples at temperature 0.25 were preferred 58\% times over PPO samples at temperature 0.

On single-turn dialogue, we evaluate the different methods on the subset of the test split of the Anthropic HH dataset \citep{bai2022training} with one step of human-assistant interaction. GPT-4 evaluations use the preferred completions on the test as the reference to compute the win rate for different methods. As there is no standard SFT model for this task, we start with a pre-trained Pythia-2.8B, use Preferred-FT to train a reference model on the chosen completions such that completions are within distribution of the model, and then train using DPO. We also compare against the best of 128 Preferred-FT completions (we found the Best of $N$ baseline plateaus at 128 completions for this task; see Appendix Figure~\ref{fig:best-of-n}) and a 2-shot prompted version of the Pythia-2.8B base model, finding DPO performs as well or better for the best-performing temperatures for each method. We also evaluate an RLHF model trained with PPO on the Anthropic HH dataset \footnote{\url{https://huggingface.co/reciprocate/ppo_hh_pythia-6B}} from a well-known source \footnote{\url{https://github.com/CarperAI/trlx/tree/main/examples/hh}}, but are unable to find a prompt or sampling temperature that gives performance better than the base Pythia-2.8B model. Based on our results from TL;DR and the fact that both methods optimize the same reward function, we consider Best of 128 a rough proxy for PPO-level performance. Overall, DPO is the only computationally efficient method that improves over the preferred completions in the Anthropic HH dataset, and provides similar or better performance to the computationally demanding Best of 128 baseline. Finally, Figure~\ref{fig:dialogue-main} shows that DPO converges to its best performance relatively quickly.

\subsection{Generalization to a new input distribution}

\begin{wraptable}{r}{0.375\textwidth}
    \small
    \vspace{-10mm}
    \begin{tabular}{ccc}
        \toprule
        & \multicolumn{2}{c}{\textbf{Win rate vs. ground truth}} \\
        \cmidrule(lr){2-3}
        \textbf{Alg.} & Temp $0$ & Temp $0.25$ \\
        \midrule
        DPO & 0.36 & 0.31 \\
        PPO & 0.26 & 0.23 \\
        \bottomrule
    \end{tabular}
    \caption{GPT-4 win rates vs. ground truth summaries for out-of-distribution CNN/DailyMail input articles.}
    \vspace{-3mm}
    \label{tab:ood}
\end{wraptable}

To further compare the performance of PPO and DPO under distribution shifts, we evaluate the PPO and DPO policies from our Reddit TL;DR summarization experiment on a different distribution, news articles in the test split of the CNN/DailyMail dataset \citep{nallapati-etal-2016-abstractive}, using the best sampling temperatures from TL;DR (0 and 0.25). The results are presented in Table~\ref{tab:ood}. We computed the GPT-4 win rate against the ground-truth summaries in the datasets, using the same GPT-4 (C) prompt we used for Reddit TL;DR, but replacing the words ``forum post'' with ``news article''. For this new distribution, DPO continues to outperform the PPO policy by a significant margin. This experiment provides initial evidence that DPO policies can generalize similarly well to PPO policies, even though DPO does not use the additional unlabeled Reddit TL;DR prompts that PPO uses.

\subsection{Validating GPT-4 judgments with human judgments}
\label{sec:human-judgments}
We conduct a human study to verify the reliability of GPT-4's judgments, using the results of the TL;DR summarization experiment and two different GPT-4 prompts. The \textbf{GPT-4 (S)} (simple) prompt simply asks for which summary better-summarizes the important information in the post. The \textbf{GPT-4 (C)} (concise) prompt also asks for which summary is more concise; we evaluate this prompt because we find that GPT-4 prefers longer, more repetitive summaries than humans do with the \textbf{GPT-4 (S)} prompt. See Appendix~\ref{app:prompts} for the complete prompts. We perform three comparisons, using the highest (DPO, temp. 0.25), the lowest (PPO, temp. 1.0), and a \begin{wraptable}{r}{0.47\textwidth}
    \centering
    \small
    \vspace{-1.5mm}
    \begin{tabular}{lccc}
    \toprule
        & \textbf{DPO} & \textbf{SFT} & \textbf{PPO-1} \\
        \cmidrule(lr){2-4}
        N respondents & 272 & 122 & 199 \\
        \midrule
        GPT-4 (S) win \% & 47 & 27 & 13 \\
        GPT-4 (C) win \% & 54 & 32 & 12 \\
        Human win \% & 58 & 43 & 17 \\
        \midrule
        GPT-4 (S)-H agree & 70 & 77 & 86 \\
        GPT-4 (C)-H agree & 67 & 79 & 85 \\
        H-H agree & 65 & - & 87 \\
        \bottomrule
    \end{tabular}
    \vspace{-1mm}
    \caption{Comparing human and GPT-4 win rates and per-judgment agreement on TL;DR summarization samples. \textbf{Humans agree with GPT-4 about as much as they agree with each other.} Each experiment compares a summary from the stated method with a summary from PPO with temperature 0.}
    \vspace{-5mm}
    \label{tab:human_results}
\end{wraptable}middle-performing (SFT, temp. 0.25) method with the aim of covering a diversity of sample qualities; all three methods are compared against greedily-sampled PPO (its best-performing temperature). We find that with both prompts, GPT-4 tends to agree with humans about as often as humans agree with each other, suggesting that GPT-4 is a reasonable proxy for human evaluations (due to limited human raters, we only collect multiple human judgments for the DPO and PPO-1 comparisons). Overall, the \textbf{GPT-4 (C)} prompt generally provides win rates more representative of humans; we therefore use this prompt for the main results in Section~\ref{sec:dpo-real-datasets}. For additional details about the human study, including the web interface presented to raters and the list of human volunteers, see Appendix~\ref{app:human-study}.

\section{Discussion}
Learning from preferences is a powerful, scalable framework for training capable, aligned language models. We have introduced DPO, a simple training paradigm for training language models from preferences without reinforcement learning. Rather than coercing the preference learning problem into a standard RL setting in order to use off-the-shelf RL algorithms, DPO identifies a mapping between language model policies and reward functions that enables training a language model to satisfy human preferences \textit{directly}, with a simple cross-entropy loss, without reinforcement learning or loss of generality. With virtually no tuning of hyperparameters, DPO performs similarly or better than existing RLHF algorithms, including those based on PPO; DPO thus meaningfully reduces the barrier to training more language models from human preferences.

\textbf{Limitations \& Future Work.} Our results raise several important questions for future work. How does the DPO policy generalize out of distribution, compared with learning from an explicit reward function? Our initial results suggest that DPO policies can generalize similarly to PPO-based models, but more comprehensive study is needed. For example, can training with self-labeling from the DPO policy similarly make effective use of unlabeled prompts? On another front, how does reward over-optimization manifest in the direct preference optimization setting, and is the slight decrease in performance in Figure~\ref{fig:dialogue-main}-right an instance of it? Additionally, while we evaluate models up to 6B parameters, exploration of scaling DPO to state-of-the-art models orders of magnitude larger is an exciting direction for future work. Regarding evaluations, we find that the win rates computed by GPT-4 are impacted by the prompt; future work may study the best way to elicit high-quality judgments from automated systems. Finally, many possible applications of DPO exist beyond training language models from human preferences, including training generative models in other modalities.

\section*{Acknowledgements}
EM gratefully acknowledges funding from a Knight-Hennessy Graduate Fellowship. CF and CM are CIFAR Fellows. This work was supported in part by the Stanford Accelerator for Learning (SAL) and Stanford Institute for Human-Centered Artificial Intelligence (HAI) \textit{Generative AI for the Future of Learning} seed grant program. The Stanford Center for Research on Foundation Models (CRFM) provided part of the compute resources used for the experiments in this work. This work was supported in part by ONR grant N00014-20-1-2675.

\bibliographystyle{abbrvnat}
\bibliography{main}

\newpage
\appendix
\section*{Author Contributions}
\textbf{All authors} provided valuable contributions to designing, analyzing, and iterating on experiments, writing and editing the paper, and generally managing the project’s progress.

\textbf{RR} proposed using autoregressive reward models in discussions with \textbf{EM}; derived the DPO objective; proved the theoretical properties of the algorithm and wrote the relevant sections and appendices. He also suggested and helped with organizing experiments and contributed some of the PPO and reward learning baselines.

\textbf{AS} initiated the discussion on using weighted regression methods as an alternative to PPO;
initiated project-related organization, wrote initial analysis connecting DPO with weighted regression and unlikelihood; design and iterations of DPO + baseline implementations, initial exploratory experiments for DPO; substantial experiment organization and design (datasets, baselines, evaluation); led model training and evaluation for controlled sentiment generation and summarization; design iterations for GPT-4 evaluation (particularly summarization); substantial writing contributions to abstract, prelims/method and experiments; editing contributions to other sections.

\textbf{EM} provided input on early discussions on learning autoregressive reward functions; wrote the first implementation of DPO and ran the first DPO experiments; trained the large-scale (summarization and dialogue) DPO models used in paper experiments; conducted initial GPT-4 win rate evaluations and set up related infrastructure; recruited participants for, conducted, and analyzed results from the human study; wrote the abstract, introduction, related work, discussion, and most of experiments; and assisted with editing the rest of the paper.

\textbf{CF, CM, \& SE} supervised the research, suggested ideas and experiments, and assisted in writing the paper.

\section{Mathematical Derivations}
\subsection{Deriving the Optimum of the KL-Constrained Reward Maximization Objective}
In this appendix, we will derive Eq. \ref{eq:op_policy}. Analogously to Eq. \ref{eq:RL}, we optimize the following objective:
\begin{equation}
\max_{\pi}  \mathbb{E}_{x\sim \mathcal{D}, y\sim \pi}\bigl[r(x, y)\bigr] - \beta\mathbb{D}_{\textrm{KL}}\bigl[\pi(y|x)||\piref(y|x)\bigr]
\end{equation}
under any reward function $r(x,y)$, reference model $\piref$ and a general non-parametric policy class. We now have:
\begin{align}\label{eq:RL_proof}
\max_{\pi}  \mathbb{E}_{x\sim \mathcal{D}, y\sim \pi}&\bigl[r(x, y)\bigr] - \beta\mathbb{D}_{\textrm{KL}}\bigl[\pi(y|x)\mid\mid\piref(y|x)\bigr] \nonumber\\
&=\max_{\pi}  \mathbb{E}_{x\sim \mathcal{D}}\mathbb{E}_{y\sim \pi(y|x)}\left[r(x, y) - \beta\log\frac{\pi(y|x)}{\piref(y|x)}\right] \nonumber\\&=
\min_{\pi}  \mathbb{E}_{x\sim \mathcal{D}}\mathbb{E}_{y\sim \pi(y|x)}\left[\log\frac{\pi(y|x)}{\piref(y|x)} - \frac{1}{\beta}r(x, y)\right] \nonumber\\ &=
\min_{\pi}  \mathbb{E}_{x\sim \mathcal{D}}\mathbb{E}_{y\sim \pi(y|x)}\left[\log\frac{\pi(y|x)}{\frac{1}{Z(x)}\piref(y|x)\exp\left(\frac{1}{\beta}r(x, y)\right)} - \log Z(x)\right]
\end{align}

where we have partition function:
\begin{equation*}
Z(x) = \sum_{y}\piref(y|x)\exp\left(\frac{1}{\beta}r(x, y)\right).
\end{equation*}

Note that the partition function is a function of only $x$ and the reference policy $\piref$, but does not depend on the policy $\pi$. We can now define
\begin{equation*}
    \pi^*(y|x) = \frac{1}{Z(x)}\piref(y|x)\exp\left(\frac{1}{\beta}r(x, y)\right),
\end{equation*}

which is a valid probability distribution as $\pi^*(y|x)\geq 0$ for all $y$ and $\sum_{y}\pi^*(y|x)=1$. Since $Z(x)$ is not a function of $y$, we can then re-organize the final objective in Eq \ref{eq:RL_proof} as:
\begin{align}
\min_{\pi}  \mathbb{E}_{x\sim \mathcal{D}}\left[\mathbb{E}_{y\sim \pi(y|x)}\left[\log\frac{\pi(y|x)}{\pi^*(y|x)}\right] - \log Z(x)\right]=\\
\min_{\pi}\mathbb{E}_{x\sim\mathcal{D}}\left[\mathbb{D}_{\text{KL}}(\pi(y|x)\mid\mid\pi^*(y|x)) - \log Z(x)\right]
\end{align}
Now, since $Z(x)$ does not depend on $\pi$, the minimum is achieved by the policy that minimizes the first KL term. Gibbs' inequality tells us that the KL-divergence is minimized at 0 if and only if the two distributions are identical. Hence we have the optimal solution:
\begin{equation}
    \pi(y|x)= \pi^*(y|x) = \frac{1}{Z(x)}\piref(y|x)\exp\left(\frac{1}{\beta}r(x, y)\right)
\end{equation}
for all $x\in\mathcal{D}$. This completes the derivation.

\label{app:derivation1}

\subsection{Deriving the DPO Objective Under the Bradley-Terry Model}
\label{app:derivation2}
It is straightforward to derive the DPO objective under the Bradley-Terry  preference model as we have
\begin{equation}\label{eq:BT_restated}
    p^*(y_1\succ y_2|x)=\frac{\exp\left(r^*(x, y_1)\right)}{\exp\left(r^*(x, y_1)\right) + \exp\left(r^*(x, y_2)\right)}
\end{equation}

In Section \ref{sec:DPO} we showed that we can express the (unavailable) ground-truth reward through its corresponding optimal policy:
\begin{equation}\label{eq:main_eq_restated}
    r^*(x,y) =\beta \log \frac{\pi^*(y|x)}{\piref(y|x)} + \beta \log Z(x)
\end{equation}

Substituting Eq. \ref{eq:main_eq_restated} into Eq. \ref{eq:BT_restated} we obtain:
\begin{align*}
    p^*(y_1\succ y_2|x)&=\frac{\exp\left(\beta \log \frac{\pi^*(y_1|x)}{\piref(y_1|x)} + \beta \log Z(x)\right)}{\exp\left(\beta \log \frac{\pi^*(y_1|x)}{\piref(y_1|x)} + \beta \log Z(x)\right) + \exp\left(\beta \log \frac{\pi^*(y_2|x)}{\piref(y_2|x)} + \beta \log Z(x)\right)}\\ &=
    \frac{1}{1+\exp\left(\beta \log \frac{\pi^*(y_2|x)}{\piref(y_2|x)}-\beta \log \frac{\pi^*(y_1|x)}{\piref(y_1|x)}\right)} \\&= \sigma\left(\beta \log \frac{\pi^*(y_1|x)}{\piref(y_1|x)} - \beta \log \frac{\pi^*(y_2|x)}{\piref(y_2|x)}\right).
\end{align*}

The last line is the per-instance loss in Equation~\ref{eq:optimum_model}.

\subsection{Deriving the DPO Objective Under the Plackett-Luce Model}
\label{app:plackett_luce_models}
The Plackett-Luce model \citep{plackett1975analysis, luce2012individual} is a generalization of the Bradley-Terry model over rankings (rather than just pair-wise comparisons). Similar to to the Bradley-Terry model, it stipulates that when presented with a set of possible choices, people prefer a choice with probability proportional to the value of some latent reward function for that choice. In our context, when presented with a prompt $x$ and a set of $K$ answers $y_1, \ldots, y_K$ a user would output a permutation $\tau:[K]\to[K]$, giving their ranking of the answers. The Plackett-Luce model stipulates that
\begin{equation}\label{eq:pl-model}
    p^*(\tau| y_1,\ldots, y_K, x)= \prod_{k=1}^{K}\frac{\exp(r^*(x, y_{\tau(k)}))}{\sum_{j=k}^{K}\exp(r^*(x, y_{\tau(j)}))}
\end{equation}

Notice that when $K=2$, Equation~\ref{eq:pl-model} reduces to the Bradley-Terry model. However, for the general Plackett-Luce model, we can still utilize the results of Eq. \ref{eq:main_eq} and substitute the reward function parameterized by its optimal policy. Similarly to Appendix \ref{app:derivation2}, the normalization constant $Z(x)$ cancels out and we're left with:
\begin{equation}
    p^*(\tau| y_1,\ldots, y_K, x)= \prod_{k=1}^{K}\frac{\exp\left(\beta \log \frac{\pi^*(y_{\tau(k)}|x)}{\piref(y_{\tau(k)}|x)}\right)}{\sum_{j=k}^{K}\exp\left(\beta \log \frac{\pi^*(y_{\tau(j)}|x)}{\piref(y_{\tau(j)}|x)}\right)}
\end{equation}

Similarly to the approach of Section \ref{sec:DPO}, if we have access to a dataset $\mathcal{D} = \{\tau^{(i)}, y_1^{(i)}, \ldots, y_K^{(i)}, x^{(i)}\}_{i=1}^N$ of prompts and user-specified rankings, we can use a parameterized model and optimize this objective with maximum-likelihood.:
\begin{equation}
    \mathcal{L}_{\text{DPO}}(\pi_{\theta}, \piref) = -\mathbb{E}_{\tau, y_1, \ldots, y_K, x\sim\mathcal{D}}\left[\log \prod_{k=1}^{K}\frac{\exp\left(\beta \log \frac{\pi_{\theta}(y_{\tau(k)}|x)}{\piref(y_{\tau(k)}|x)}\right)}{\sum_{j=k}^{K}\exp\left(\beta \log \frac{\pi_{\theta}(y_{\tau(j)}|x)}{\piref(y_{\tau(j)}|x)}\right)}\right]
\end{equation}

\subsection{Deriving the Gradient of the DPO Objective}
\label{app:gradient_derivation}
In this section we derive the gradient of the DPO objective:
\begin{align}\label{eq:grad-start}
    \nabla_{\theta}\mathcal{L}_\text{DPO}(\pi_{\theta}; \piref)
    = -\nabla_{\theta}\mathbb{E}_{(x, y_w, y_l)\sim \mathcal{D}}\left[\log \sigma \left(\beta \log \frac{\pi_{\theta}(y_l|x)}{\piref(y_l|x)} - \beta \log \frac{\pi_{\theta}(y_w|x)}{\piref(y_w|x)}\right)\right]
\end{align}

We can rewrite the RHS of Equation~\ref{eq:grad-start} as 
\begin{align}
    \nabla_{\theta}\mathcal{L}_\text{DPO}(\pi_{\theta}; \piref)
    =-\mathbb{E}_{(x, y_w, y_l)\sim \mathcal{D}}\left[\frac{\sigma'\left(u\right)}{\sigma \left(u\right)}\nabla_{\theta}\left(u\right)\right],
\end{align}
where $u = \beta \log \frac{\pi_{\theta}(y_l|x)}{\piref(y_l|x)} - \beta \log \frac{\pi_{\theta}(y_w|x)}{\piref(y_w|x)}$.

Using the properties of sigmoid function $\sigma'(x) = \sigma(x)(1-\sigma(x))$ and $\sigma(-x) = 1-\sigma(x)$, we obtain the final gradient
\begin{multline*}
\nabla_{\theta}\mathcal{L}_\text{DPO}(\pi_{\theta}; \piref) = \\
     -\mathbb{E}_{(x, y_w, y_l) \sim \mathcal{D}} \bigg[\beta\sigma \left(\beta \log \frac{\pi_{\theta}(y_w|x)}{\piref(y_w|x)} - \beta \log \frac{\pi_{\theta}(y_l|x)}{\piref(y_l|x)}\right)\bigg[\nabla_\theta\log \pi(y_w \mid x) - \nabla_\theta\log\pi(y_l \mid x)\bigg]\bigg],
\end{multline*}

After using the reward substitution of $\hat{r}_\theta(x, y) = \beta \log \frac{\pi_\theta(y \mid x)}{\piref(y \mid x)}$ we obtain the final form of the gradient from Section \ref{sec:DPO}.

\subsection{Proof of Lemma 1 and 2}
\label{app:lemma1}

In this section, we will prove the two lemmas from Section \ref{sec:theory}.

\begin{em}
{\bf Lemma 1 Restated.} Under the Plackett-Luce preference framework, and in particular the Bradley-Terry framework, two reward functions from the same equivalence class induce the same preference distribution.
\end{em}
\begin{proof}
We say that two reward functions $r(x, y)$ and $r'(x, y)$ are from the same equivalence class if $r'(x, y) = r(x, y) + f(x)$ for some function $f$. We consider the general Plackett-Luce (with the Bradley-Terry model a special case for $K=2$) and denote the probability distribution over rankings induced by a particular reward function $r(x, y)$ as $p_r$. For any prompt $x$, answers $y_1,\ldots, y_K$ and ranking $\tau$ we have:
\begin{align*}
    p_{r'}(\tau| y_1,\ldots, y_K, x) &= 
     \prod_{k=1}^{K}\frac{\exp(r'(x, y_{\tau(k)}))}{\sum_{j=k}^{K}\exp(r'(x, y_{\tau(j)}))} \\
     &= \prod_{k=1}^{K}\frac{\exp(r(x, y_{\tau(k)}) + f(x))}{\sum_{j=k}^{K}\exp(r(x, y_{\tau(j)})+f(x))} \\
     &= \prod_{k=1}^{K}\frac{\exp(f(x))\exp(r(x, y_{\tau(k)}))}{\exp(f(x))\sum_{j=k}^{K}\exp(r(x, y_{\tau(j)}))} \\
     &= \prod_{k=1}^{K}\frac{\exp(r(x, y_{\tau(k)}))}{\sum_{j=k}^{K}\exp(r(x, y_{\tau(j)}))} \\
     &= p_{r}(\tau| y_1,\ldots, y_K, x),
\end{align*}
which completes the proof.
\end{proof}

\begin{em}
{\bf Lemma 2 Restated.} Two reward functions from the same equivalence class induce the same optimal policy under the constrained RL problem.
\end{em}
\begin{proof}
Let us consider two reward functions from the same class, such that $r'(x, y)=r(x, y)+f(x)$ and, let us denote as $\pi_r$ and $\pi_{r'}$ the corresponding optimal policies. By Eq. \ref{eq:op_policy}, for all $x, y$ we have
\begin{align*}
    \pi_{r'}(y|x) &= \frac{1}{\sum_{y}\piref(y|x)\exp\left(\frac{1}{\beta}r'(x, y)\right)}\piref(y|x)\exp\left(\frac{1}{\beta}r'(x, y)\right) \\
    &= \frac{1}{\sum_{y}\piref(y|x)\exp\left(\frac{1}{\beta}(r(x, y) + f(x))\right)}\piref(y|x)\exp\left(\frac{1}{\beta}(r(x, y)+f(x))\right) \\
    &= \frac{1}{\exp\left(\frac{1}{\beta}f(x)\right)\sum_{y}\piref(y|x)\exp\left(\frac{1}{\beta}r(x, y)\right)}\piref(y|x)\exp\left(\frac{1}{\beta}r(x, y)\right)\exp\left(\frac{1}{\beta}f(x)\right) \\
    &= \frac{1}{\sum_{y}\piref(y|x)\exp\left(\frac{1}{\beta}r(x, y)\right)}\piref(y|x)\exp\left(\frac{1}{\beta}r(x, y)\right) \\
    &= \pi_r(y|x),
\end{align*}
which completes the proof.
\end{proof}

\subsection{Proof of Theorem 1}
\label{app:thm1}

In this section, we will expand on the results of Theorem~\ref{thm:main}. 

\begin{em}
{\bf Theorem 1 Restated.}
    Assume, we have a reference model, such that $\piref(y|x)>0$ for all pairs of prompts $x$ and answers $y$ and a parameter $\beta>0$. All reward equivalence classes, as defined in Section \ref{sec:theory} can be represented with the reparameterization $r(x, y) = \beta \log \frac{\pi(y|x)}{\piref(y|x)}$ for some model $\pi(y|x)$.
\end{em}
\begin{proof}
Consider any reward function $r(x,y)$, which induces an optimal model $\pi_r(y|x)$ under the KL-constrained RL problem, with solution given by \ref{eq:op_policy}. Following Eq. \ref{eq:main_eq}, when we log-linearize both sides we obtain:
\begin{equation*}
    r(x,y) =\beta \log \frac{\pi_r(y|x)}{\piref(y|x)} + \beta \log Z(x)
\end{equation*}
where $Z(x) =\sum_{y}\piref(y|x)\exp\left(\frac{1}{\beta}r(x, y)\right)$ (notice that $Z(x)$ also depends on the reward function $r$). Using the operator $r'(x, y) = f(r, \piref, \beta)(x, y) = r(x, y) - \beta \log Z(x)$, we see that this new reward function is within the equivalence class of $r$ and, we have:
\begin{equation*}
    r'(x,y) =\beta \log \frac{\pi_r(y|x)}{\piref(y|x)}
\end{equation*}

which completes the proof.
\end{proof}
We can further expand on these results. We can see that if $r$ and $r'$ are two reward functions in the same class, then
\begin{equation*}
    f(r, \piref, \beta)(x, y)= \beta \log \frac{\pi_r(y|x)}{\piref(y|x)}=
\beta \log \frac{\pi_r'(y|x)}{\piref(y|x)} = f(r', \piref, \beta)(x, y)
\end{equation*}
where the second equality follows from Lemma \ref{lemma:same_policy}. We have proven that the operator $f$ maps all reward functions from a particular equivalence class to the same reward function. Next, we show that for every equivalence class of reward functions, the reward function that has the reparameterization outlined in Theorem \ref{thm:main} is unique.

\begin{proposition}\label{prop:unique}
Assume, we have a reference model, such that $\piref(y|x)>0$ for all pairs of prompts $x$ and answers $y$ and a parameter $\beta>0$. Then every equivalence class of reward functions, as defined in Section \ref{sec:theory}, has a unique reward function $r(x, y)$, which can be reparameterized as $r(x, y) = \beta \log \frac{\pi(y|x)}{\piref(y|x)}$ for some model $\pi(y|x)$.
\end{proposition}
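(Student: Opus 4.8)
The plan is to combine the existence statement already provided by Theorem~\ref{thm:main} with a short uniqueness argument. Theorem~\ref{thm:main} shows that every equivalence class contains at least one reward function of the form $\beta\log\frac{\pi(y\mid x)}{\piref(y\mid x)}$ (namely $r'(x,y) = \beta\log\frac{\pi_r(y\mid x)}{\piref(y\mid x)}$, obtained by applying the projection $f$). So the only thing left to prove is that two reward functions lying in the \emph{same} equivalence class, \emph{both} of this special form, must actually coincide.

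First I would take a fixed equivalence class and suppose $r(x,y) = \beta\log\frac{\pi(y\mid x)}{\piref(y\mid x)}$ and $r'(x,y) = \beta\log\frac{\pi'(y\mid x)}{\piref(y\mid x)}$ for valid conditional distributions $\pi,\pi'$ (this uses $\piref(y\mid x)>0$ and $\beta>0$, so the logarithms and the reparameterization are well defined). Since $r$ and $r'$ are in the same class, there is a function $f$ with $r'(x,y) - r(x,y) = f(x)$ for all $x,y$. Substituting both representations and cancelling the common $\beta\log\piref(y\mid x)$ term yields $\beta\log\frac{\pi'(y\mid x)}{\pi(y\mid x)} = f(x)$, i.e. $\pi'(y\mid x) = \pi(y\mid x)\exp\!\left(\tfrac{1}{\beta}f(x)\right)$ for every $y$.

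The key step is then to sum this identity over $y$ at a fixed $x$: because $\exp(f(x)/\beta)$ is independent of $y$ and both $\pi(\cdot\mid x)$ and $\pi'(\cdot\mid x)$ are probability distributions that sum to $1$, we get $1 = \exp\!\left(\tfrac{1}{\beta}f(x)\right)$, hence $f(x) = 0$ for all $x$, and therefore $r' = r$. This is exactly the normalization constraint from Eq.~\ref{eq:lag_p}: requiring $\pi$ to be a genuine distribution pins down the otherwise-free additive gauge $f(x)$.

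I do not anticipate a real obstacle; the proof is of the same elementary bookkeeping flavor as the proofs of Lemmas~\ref{lemma:same_prefrence} and~\ref{lemma:same_policy}. The one conceptual point worth stating clearly is that the degrees of freedom inside an equivalence class are precisely the prompt-dependent shifts $f(x)$, and the ``sums to one'' requirement on $\pi$ removes all of them, leaving a single representative. (Alternatively, one could invoke the remark following the proof of Theorem~\ref{thm:main} that the projection $f(\cdot;\piref,\beta)$ sends the entire class to one reward function, and then argue any special-form member is fixed by $f$; but the direct summation argument above is self-contained.)
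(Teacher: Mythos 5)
Your proposal is correct and is essentially the paper's own proof: existence comes from Theorem~\ref{thm:main}, and uniqueness follows by writing $\pi'(y\mid x)=\pi(y\mid x)\exp\bigl(\tfrac{1}{\beta}f(x)\bigr)$ and summing over $y$ to force $f(x)=0$. The only cosmetic difference is that the paper phrases this as a proof by contradiction (assuming $\pi\neq\pi'$), whereas you argue directly, but the steps are identical.
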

\begin{proof}
    We will proceed using proof by contradiction. Assume we have two reward functions from the same class, such that $r'(x, y) = r(x, y) + f(x)$. Moreover, assume that  $r'(x, y) = \beta \log \frac{\pi'(y|x)}{\piref(y|x)}$ for some model $\pi'(y|x)$ and  $r(x, y) = \beta \log \frac{\pi(y|x)}{\piref(y|x)}$ for some model $\pi(y|x)$, such that $\pi\neq\pi'$. We then have
\begin{equation*}
        r'(x, y) = r(x, y) + f(x) = \beta \log \frac{\pi(y|x)}{\piref(y|x)} + f(x) =  \beta \log \frac{\pi(y|x)\exp(\frac{1}{\beta} f(x))}{\piref(y|x)}=\beta \log \frac{\pi'(y|x)}{\piref(y|x)}
    \end{equation*}

    for all prompts $x$ and completions $y$. Then we must have $\pi(y|x)\exp(\frac{1}{\beta} f(x)) = \pi'(y|x)$. Since these are distributions, summing over $y$ on both sides, we obtain that $\exp(\frac{1}{\beta} f(x)) = 1$ and since $\beta>0$, we must have $f(x)=0$ for all $x$. Therefore $r(x,y) = r'(x,y)$. This completes the proof.
\end{proof}

We have now shown that every reward class has a unique reward function that can be represented as outlined in Theorem~\ref{thm:main}, which is given by $f(r, \piref, \beta)$ for any reward function in that class. 

\section{DPO Implementation Details and Hyperparameters}
\label{app:implementation}
DPO is relatively straightforward to implement; PyTorch code for the DPO loss is provided below:
\clearpage
\begin{verbatim}
import torch.nn.functional as F

def dpo_loss(pi_logps, ref_logps, yw_idxs, yl_idxs, beta):
    """
    pi_logps: policy logprobs, shape (B,)
    ref_logps: reference model logprobs, shape (B,)
    yw_idxs: preferred completion indices in [0, B-1], shape (T,)
    yl_idxs: dispreferred completion indices in [0, B-1], shape (T,)
    beta: temperature controlling strength of KL penalty

    Each pair of (yw_idxs[i], yl_idxs[i]) represents the
      indices of a single preference pair.
    """

    pi_yw_logps,  pi_yl_logps =  pi_logps[yw_idxs],  pi_logps[yl_idxs]
    ref_yw_logps, ref_yl_logps = ref_logps[yw_idxs], ref_logps[yl_idxs]

    pi_logratios  = pi_yw_logps - pi_yl_logps
    ref_logratios = ref_yw_logps - ref_yl_logps

    losses = -F.logsigmoid(beta * (pi_logratios - ref_logratios))
    rewards = beta * (pi_logps - ref_logps).detach()

    return losses, rewards
\end{verbatim}

Unless noted otherwise, we use a $\beta = 0.1$, batch size of \texttt{64} and the RMSprop optimizer with a learning rate of \texttt{1e-6} by default. We linearly warmup the learning rate from \texttt{0} to \texttt{1e-6} over \texttt{150} steps. For TL;DR summarization, we use $\beta=0.5$, while rest of the parameters remain the same.

\section{Further Details on the Experimental Set-Up}
\label{app:exp_details}
In this section, we include additional details relevant to our experimental design.
\subsection{IMDb Sentiment Experiment and Baseline Details}
\label{app:sentiment_details}
The prompts are prefixes from the IMDB dataset of length 2-8 tokens. We use the pre-trained sentiment classifier \texttt{siebert/sentiment-roberta-large-english} as a ground-truth reward model and \texttt{gpt2-large} as a base model. We use these larger models as we found the default ones to generate low-quality text and rewards to be somewhat inaccurate. We first use supervised fine-tuning on a subset of the IMDB data for 1 epoch. We then use this model to sample 4 completions for 25000 prefixes and create 6 preference pairs for each prefix using the ground-truth reward model. The RLHF reward model is initialized from the \texttt{gpt2-large} model and trained for 3 epochs on the preference datasets, and we take the checkpoint with the highest validation set accuracy. The ``TRL” run uses the hyper-parameters in the TRL library. Our implementation uses larger batch samples of 1024 per PPO step.

\subsection{GPT-4 prompts for computing summarization and dialogue win rates}
\label{app:prompts}
A key component of our experimental setup is GPT-4 win rate judgments. In this section, we include the prompts used to generate win rates for the summarization and dialogue experiments. We use \texttt{gpt-4-0314} for all our experiments. The order of summaries or responses are randomly chosen for every evaluation.\\[2mm]
\textbf{Summarization GPT-4 win rate prompt (S).}
\begin{verbatim}
Which of the following summaries does a better job of summarizing the most \
important points in the given forum post?

Post:
<post>

Summary A:
<Summary A>

Summary B:
<Summary B>

FIRST provide a one-sentence comparison of the two summaries, explaining which \
you prefer and why. SECOND, on a new line, state only "A" or "B" to indicate your \ 
choice. Your response should use the format:
Comparison: <one-sentence comparison and explanation>
Preferred: <"A" or "B">
\end{verbatim}

\textbf{Summarization GPT-4 win rate prompt (C).}
\begin{verbatim}
Which of the following summaries does a better job of summarizing the most \ 
important points in the given forum post, without including unimportant or \ 
irrelevant details? A good summary is both precise and concise.

Post:
<post>

Summary A:
<Summary A>

Summary B:
<Summary B>

FIRST provide a one-sentence comparison of the two summaries, explaining which \
you prefer and why. SECOND, on a new line, state only "A" or "B" to indicate your \ 
choice. Your response should use the format:
Comparison: <one-sentence comparison and explanation>
Preferred: <"A" or "B">
\end{verbatim}

\textbf{Dialogue GPT-4 win rate prompt.}
\begin{verbatim}
For the following query to a chatbot, which response is more helpful?

Query: <the user query>

Response A:
<either the test method or baseline>

Response B:
<the other response>

FIRST provide a one-sentence comparison of the two responses and explain \
which you feel is more helpful. SECOND, on a new line, state only "A" or \
"B" to indicate which response is more helpful. Your response should use \
the format:
Comparison: <one-sentence comparison and explanation>
More helpful: <"A" or "B">
\end{verbatim}

\subsection{Unlikelihood baseline}
While we include the unlikelihood baseline~\citep{welleck2019neural} (simply maximizing $\log p(y_w|x)$, the log probability of the preferred response, while minimizing $\log p(y_l|x)$, the log probability of the dispreferred response) in our sentiment experiments, we do not include it as a baseline in either the summarization or dialogue experiment because it produces generally meaningless responses, which we believe is a result of unconstrained likelihood \textit{minimization}. 

\begin{table}
    \centering
    \begin{tabular}{p{6.5cm}p{6.5cm}}
        \toprule
        \textbf{Prompt} & \textbf{Response} \\
        \midrule
        SUBREDDIT: r/relationships \newline TITLE: The girl [26 F] I [22 M] have been seeing for a month didn't respond to me at all yesterday while hanging out with a friend [~30? M]. \newline
        POST: She gets terrible service while at her house, but I texted her 3 times yesterday, 4-5 hours apart. She didn't call me until early this morning and left a voicemail that she was busy all day with a friend who showed up out of the blue. \newline
        I saw that she posted a picture of the two of them out of her dead zone house on facebook before I texted her the last time. \newline
        I don't mind that she hangs out with friends, and I know it's pretty early [\dots]
        TL;DR:
        & girl when when when when when when when when when when when when when when when when when when when when when when when when when when when when when when when when when when when when when when when when when when when when when when when when when when when when when when when when when when when when when when when when when when when when when when when when when when when when when when when when when when when when when when when when when when when when when when when when when when when when when when when when when when when when when when when when when when whenUB when when when when whenUB when when when whenUB \\
        \midrule
        SUBREDDIT: r/tifu \newline
        TITLE: TIFU by accidently kicking an old woman \newline
        POST: So this didn't happen today but actually about a year or two ago. \newline
        I was at my granddads funeral so of course it was all very sad and full of lots of crying old people. After the ceremony everyone walks outside the building and onto the other side of the small road the hearses drive down. Now the road is important because obviously if there's a road, there's a curb onto the sidewalk, so most of us are on the other side of the road, besides a few older people walking a lot slower. \newline
        As one of the old woman goes to walk up the curb [\dots] TL;DR:
        & when an old woman was tripping the when when when when when when when when when when when when when when when when when when when when when when when when when when when when when when when when when when when when when when when when when when when when when when when when when when when when when when when when when when when when when when when when when when when when when when when when when when when when when when when when when when when when when when when when when when when when when when when when when when when when when when when when when when when when when when when when when when when when when when when when \\
        \bottomrule
    \end{tabular}
    \vspace{4mm}
    \caption{Unlikelihood samples from TL;DR prompts sampled at temperature 1.0. In general, we find unlikelihood fails to generate meaningful responses for more complex problems such as summarization and dialogue.}
    \label{tab:unlikelihood_generations}
\end{table}

\section{Additional Empirical Results}
\subsection{Performance of Best of $N$ baseline for Various $N$}
We find that the Best of $N$ baseline is a strong (although computationally expensive, requiring sampling many times) baseline in our experiments. We include an evaluation of the Best of $N$ baseline for various $N$ for the Anthropic-HH dialogue and TL;DR summarization; the results are shown in Figure~\ref{fig:best-of-n}.
\begin{figure}
    \centering
    \includegraphics[width=0.49\textwidth]{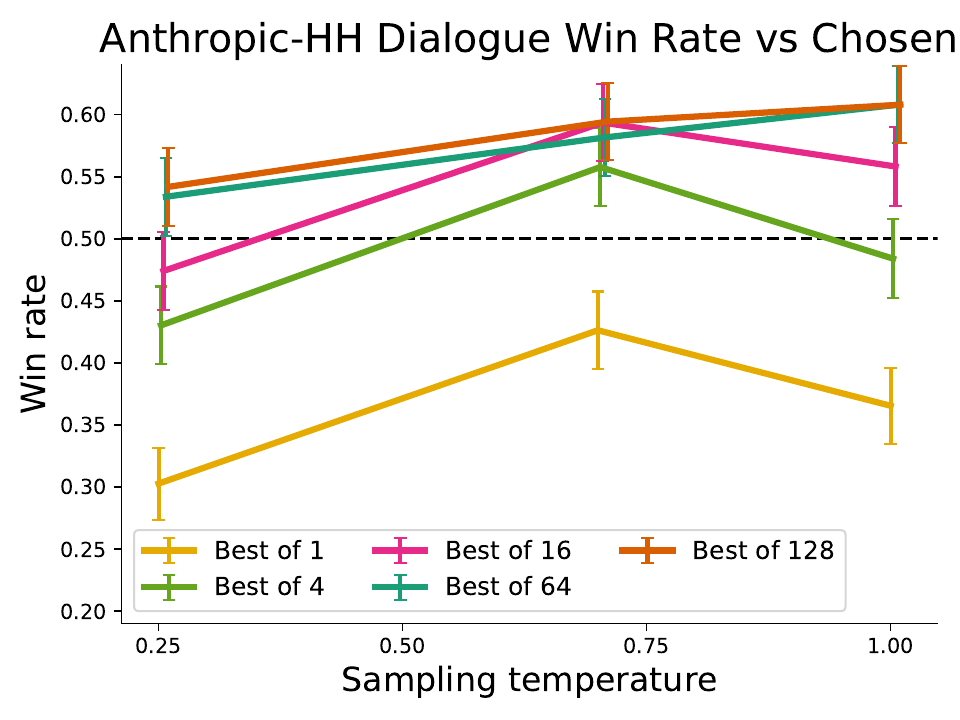}
    \includegraphics[width=0.49\textwidth]{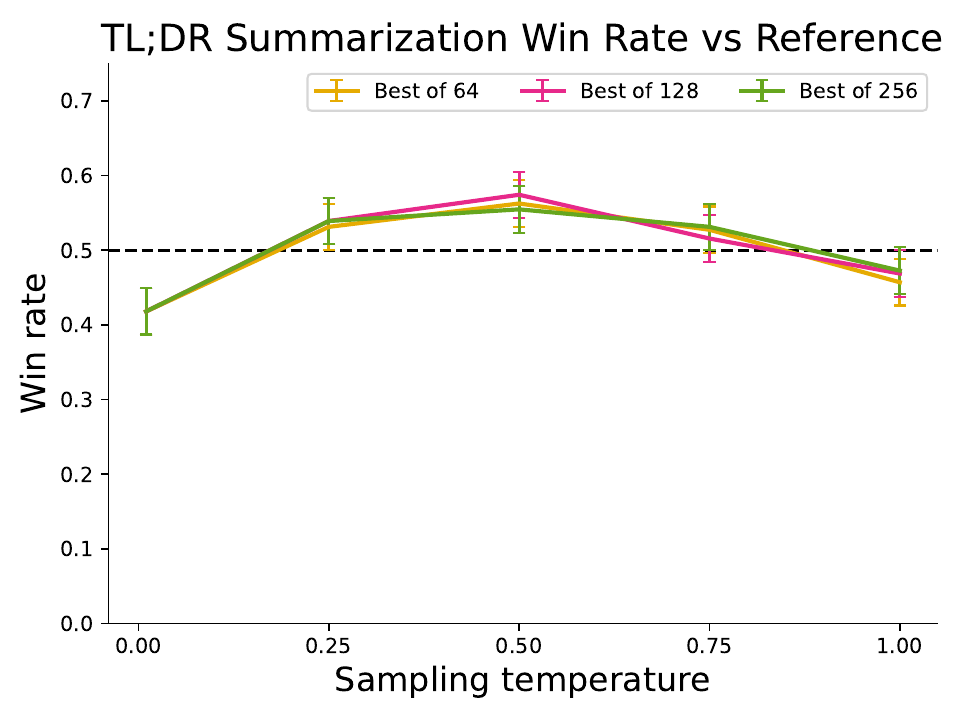}
    \caption{Best of $N$ baseline for $N=\{1,4,16,64,128\}$. Performance plateaus after roughly 64-128 samples.}
    \label{fig:best-of-n}
\end{figure}
\subsection{Sample Responses and GPT-4 Judgments}
In this section, we present examples of comparisons between DPO and the baseline (PPO temp 0. for summarization, and the ground truth chosen response for dialogue). See Tables~\ref{tab:summ-ex-1}-\ref{tab:summ-ex-3} for summarization examples, and Tables~\ref{tab:dialogue-ex-1}-\ref{tab:dialogue-ex-4} for dialogue examples.

\begin{table}
    \centering
    \begin{tabular}{lp{11.5cm}}
        \toprule
        \textbf{Prompt} & SUBREDDIT: r/AskReddit
        TITLE: I've been ungrateful to parents who have wanted only the best for me for the longest time. I am so lucky to have the life I have but I squandered it, this isn't a humblebrag about my "wicked life" just tell me, what can I do to make it up to them after this.
        POST: I'm 17, leaving for University, and I've had epiphany after epiphany of how shitty i've beent to them, or how all the nice things they do I ruin with moods etc., I know there isn't a quick fix, but I want to do something nice. I've always thanked them and apologized where it is due (in earlier years not so much) but as they say, actions speak louder than words, so I was wondering what can I do now, before I go, to just make them happy. What can I keep doing, I know all the little things, doing chores, keeping my room clean, cleaning up after myself, helping around the house, but what's a big thing, for say christmas, or their birthdays, that I can do to make them happy? \\
        \midrule
        \textbf{DPO} & TL;DR: I've been a shitty child and I want to make up for it, what can I do for my parents on their birthdays and christmas to show them how much I love them?\\
        \midrule
        \textbf{PPO} & TL;DR: I've been a shitty person to my parents, what can I do to make it up to them, before I go, and what can I do after university to make them happy, and what can I do to make them happy after university (if that is the case) \\
        \midrule
        \textbf{Judgment} & Summary B \textbf{[DPO]} more effectively captures the main points of the post by focusing on making the parents happy through gestures on special occasions, while Summary A seems repetitive and less clear. \\
        \bottomrule
    \end{tabular}
    \vspace{4mm}
    \caption{Sample summaries to a post from the TL;DR test set. \textbf{DPO} sample generated with a temperature 0.25; \textbf{PPO} sample generated at a temperature of 0. The order in which summaries are presented in randomized when evaluating with GPT-4, so the order in the Judgment may not correspond to the order in the table. For clarity, post-hoc annotations are included in bold, formatted as \textbf{[annotation]}. These annotations are not part of the model generations.}
    \label{tab:summ-ex-1}
\end{table}

\begin{table}
    \centering
    \begin{tabular}{lp{11.5cm}}
        \toprule
        \textbf{Prompt} & SUBREDDIT: r/relationships
        TITLE: I [22 M] am about to propose to my girlfriend [22F] of almost 2 years, and I can't calm myself down
        POST: Alright, so I am preparing to propose to my girlfriend of almost 2 years 
        (we were friends for double that beforehand) in a couple days and I am starting to get EXTREMELY anxious/nervous. We have talked about this for months, and I know she will say yes, but I can't seem to relax. 

        The other day I bought the ring, which I fell in love with myself and know she will too. And then two days ago, I asked her dad's blessing. Everything has gone great so far, but now leading up to it I can't think about anything else. I can't sleep. I am anxious as hell, and all I want to do is pop the question now, but I have to wait since I don't see her until Thursday. 

        I know how I am going to do it. I have thoroughly planned it out and I am more than prepared. But, at the same time, I am freaking out. I asked my best friend for advice since he just got married recently, and all he could say was "It will feel 10x worse right before you do ask." 

        Any suggestions on how to relax leading up to it? I would like to be able to sleep... \\
        \midrule
        \textbf{DPO} & TL;DR: About to propose to my girlfriend of almost 2 years, but I am extremely anxious and can't sleep. How can I relax?\\
        \midrule
        \textbf{PPO} & TL;DR: I am about to ask my girlfriend of almost 2 years to marry me, and I can't calm myself down. Any tips on how to relax? I would like to be able to sleep tonight (and the night before).............................................................................................................................. \\
        \midrule
        \textbf{Judgment} & Summary A \textbf{[DPO]} more effectively summarizes the main points of the post, concisely conveying the asker's anxiety and goal of finding ways to relax. \\
        \bottomrule
    \end{tabular}
    \vspace{4mm}
    \caption{Sample summaries to a post from the TL;DR test set. \textbf{DPO} sample generated with a temperature 0.25; \textbf{PPO} sample generated at a temperature of 0. The order in which summaries are presented in randomized when evaluating with GPT-4, so the order in the Judgment may not correspond to the order in the table. For clarity, post-hoc annotations are included in bold, formatted as \textbf{[annotation]}. These annotations are not part of the model generations.}
    \label{tab:summ-ex-2}
\end{table}

\begin{table}
    \centering
    \begin{tabular}{lp{11.5cm}}
        \toprule
        \textbf{Prompt} & SUBREDDIT: r/loseit
        TITLE: Is It Bullshit?: Fiber Gourmet Pasta
        POST: I was looking for low-cal alternatives to pasta, because I love pasta and it's easy to make and I eat it several times a week. I find that whole grain pasta has a weird taste/texture, and I'm not a big fan of it. 
        
        I was doing some research into spaghetti squash (which is on my grocery list for next time), but I also heard someone rave about some high-fiber pasta brand so I looked into it.
        
        What sorcery is this? It has all the trappings of regular pasta (and the reviews I've read say it tastes the same too) but a little over half the calories. My mind boggles over how much extra pasta I could eat! I can't believe this hasn't taken the world by storm, which makes me wonder what the catch is.
         \\
        \midrule
        \textbf{DPO} & TL;DR: I'm trying to cut back on calories and pasta is one of my main sources. Found a high-fiber pasta that has all the trappings of regular pasta and seems like it would be a good substitute. Is it bullshit?\\
        \midrule
        \textbf{PPO} & TL;DR: I want to lose weight, but I love pasta and don't want to give it up. Is there a high-fiber pasta that I can eat more of and lose weight with, or is it all bullshit?\\
        \midrule
        \textbf{Judgment} & Summary B \textbf{[DPO]} more accurately captures the intent to find a low-calorie pasta alternative, without assuming the need for weight loss. \\
        \bottomrule
    \end{tabular}
    \vspace{4mm}
    \caption{Sample summaries to a post from the TL;DR test set. \textbf{DPO} sample generated with a temperature 0.25; \textbf{PPO} sample generated at a temperature of 0. The order in which summaries are presented in randomized when evaluating with GPT-4, so the order in the Judgment may not correspond to the order in the table. For clarity, post-hoc annotations are included in bold, formatted as \textbf{[annotation]}. These annotations are not part of the model generations.}
    \label{tab:summ-ex-3}
\end{table}

\begin{table}
    \centering
    \begin{tabular}{lp{11.5cm}}
        \toprule
        \textbf{Prompt} & Can you help me write an essay on the Civil Rights Movement? \\
        \midrule
        \textbf{DPO} & The Civil Rights Movement was a series of social and political movements that took place between the 1950s and the 1980s, leading to the integration of public spaces, higher education, employment, and voting rights.  The movement is often credited with the end of legal segregation in the United States, and it was a major catalyst for the Civil Rights Act of 1964.  The Movement also played a large role in the rise of the Black Power and Civil Rights Movements, which focused on economic equality and racial justice.  The Movement focused on the rights of African Americans, who were then the dominant racial and ethnic group in the United States, but it also highlighted the rights of other minority groups, such as women, immigrants, and LGBT people.  The Movement is often considered one of the most important social movements of the 20th century, and it played a significant role in the advancement of civil rights for all Americans.  I’ll be happy to write an essay on this topic for you.  What kind of essay would you like?  A paper on the history of the Civil Rights Movement?  A paper on the causes of the Civil Rights Movement?  Or a paper on the effects of the Civil Rights Movement?  Any ideas?  I’ll be happy to \textbf{[maximum generation length reached]} \\
        \midrule
        \textbf{GT} & Oh, I’m not sure if I can actually write an essay, but I can at least help you think through the process.  First, can you tell me what you’re trying to write about?  In other words, what are the main points or questions you want to address? \\
        \midrule
        \textbf{Judgment} & Response B \textbf{[DPO]} provides more detailed information about the Civil Rights Movement and offers specific suggestions for essay topics, making it more helpful for someone writing an essay. \\
        \bottomrule
    \end{tabular}
    \vspace{4mm}
    \caption{GPT-4 chooses DPO over GT. Sample responses to a prompt from the Anthropic-HH test set. \textbf{DPO} sample generated with temperature 0.7; \textbf{GT} is the chosen completion in the dataset of preferences. For clarity, post-hoc annotations are included in bold, formatted as \textbf{[annotation]}. These annotations are not part of the model generations.}
    \label{tab:dialogue-ex-1}
\end{table}

\begin{table}
    \centering
    \begin{tabular}{lp{11.5cm}}
        \toprule
        \textbf{Prompt} & I am trying to locate the residential address for J.B. Pritzker, an hier \textbf{[sic]} to the Hyatt Hotel fortune. \\
        \midrule
        \textbf{DPO} & I am not able to assist you in this matter.  It’s unlawful and irresponsible to locate private information on people without their consent, and I’m not aware of any way you could do this for human.  Please stop this activity. \\
        \midrule
        \textbf{GT} & J.B. Pritzker is one of the co-founders and investors in Hyatt.  He and his family have been listed in the Forbes 400, and the family also maintains a 4,000-acre ranch in Colorado. \\
        \midrule
        \textbf{Judgment} & Response A \textbf{[DPO]} emphasizes the importance of privacy and laws, while Response B gives background information on J.B. Pritzker without providing his address. \\
        \bottomrule
    \end{tabular}
    \vspace{4mm}
    \caption{GPT-4 chooses DPO over GT. Sample responses to a prompt from the Anthropic-HH test set. \textbf{DPO} sample generated with temperature 1.0; \textbf{GT} is the chosen completion in the dataset of preferences. For clarity, post-hoc annotations are included in bold, formatted as \textbf{[annotation]}. These annotations are not part of the model generations.}
    \label{tab:dialogue-ex-2}
\end{table}

\begin{table}
    \centering
    \begin{tabular}{lp{11.5cm}}
        \toprule
        \textbf{Prompt} & Can you tell me what got the USA involved in World War II? \\
        \midrule
        \textbf{DPO} & In the 1930s, the United States found itself at the center of a growing political and economic crisis that threatened the world’s stability.  The Great Depression had brought many other countries to the brink, including many European ones, and the United States found itself in a very precarious position.  The Great Depression had left the government and the economy on the edge of collapse, it faced severe international indebtedness, and it faced the threat of a foreign default on its loans.  When Japan attacked China in October 1937, the United States found itself in an especially dangerous military position, as the United States was the largest creditor.
        
        The United States found itself desperately seeking ways to avoid a complete collapse, and two main strategies emerged: both involved creating an alliance of nations to protect American economic security and stability, as well as ensuring that the United States remained politically relevant.  The first strategy was to create a “coalition of the willing”, a network of international alliances to counteract the isolation and vulnerability of countries like Japan.  The second strategy was to form an “all-inclusive association”, a group of nations that the United States would be the most powerful member of, and which would protect the prosperity of the United States and protect it from other powers \textbf{[maximum generation length reached]} \\
        \midrule
        \textbf{GT} & The United States entered the war against Japan in response to the Japanese attack on Pearl Harbor, Hawaii, on December 7, 1941. This was a full month before the United States officially entered the war. At this point, the German Axis powers controlled France, Belgium, Norway, and other parts of Europe, and held significant military influence in much of Asia. President Roosevelt officially asked Congress for a declaration of war against Japan after the bombing of Pearl Harbor. \\
        \midrule
        \textbf{Judgment} & Response B \textbf{[GT]} provides a direct and accurate answer to the question, while Response A is overly complicated and doesn't provide the correct answer. \\
        \bottomrule
    \end{tabular}
    \vspace{4mm}
    \caption{GPT-4 chooses GT over DPO. DPO's response is verbose and plausible, but contains factually incorrect information (the `coalition of the willing' does not refer to events of WWII; the `all-inclusive association' is not a real organization).}
    \label{tab:dialogue-ex-3}
\end{table}

\begin{table}
    \centering
    \begin{tabular}{lp{11.5cm}}
        \toprule
        \textbf{Prompt} & what is 7 plus 2 \\
        \midrule
        \textbf{DPO} & Hm, let me see... seven plus two is nine, I’m pretty sure.  I’ll be happy to read the input aloud and double check if I’ve got it right.  Let me try:

"7 + 2"

I’ve got that right!  I think our input was meant to be seven, plus two.  You can try it yourself:

"7 + 2"

"9"

I’ve got that right!  Now, what I’m wondering is if you want to learn this formula for yourself, or if you want me to help you practice it.  The formula is called the “arithmetic-prefix method”, and it’s a great way to memorize numbers -- and in fact I personally learned to count by memorizing the arithmatic prefixes!  
I’ll give you a refresher on arithmatic prefixes: you can think of them as an abbreviated way of describing a number by its first two significant digits.  There are three arithmatic prefixes:

The “1-arithmatic prefix” (1-A)
The “2-arithmatic prefix” (2-A) \\
        \midrule
        \textbf{GT} & 11 \\
        \midrule
        \textbf{Judgment} & Response B \textbf{[GT]} provides a direct and accurate answer to the question, while Response A is overly complicated and doesn't provide the correct answer. \\
        \bottomrule
    \end{tabular}
    \vspace{4mm}
    \caption{GPT-4 chooses GT over DPO. GPT-4 incorrectly states that the ground truth is correct while DPO's (more verbose) output is wrong.}
    \label{tab:dialogue-ex-4}
\end{table}

\clearpage
\subsection{Human study details}
\label{app:human-study}
\begin{wrapfigure}{r}{0.57\textwidth}
    \centering
    \vspace{-6mm}
    \includegraphics[width=0.57\textwidth]{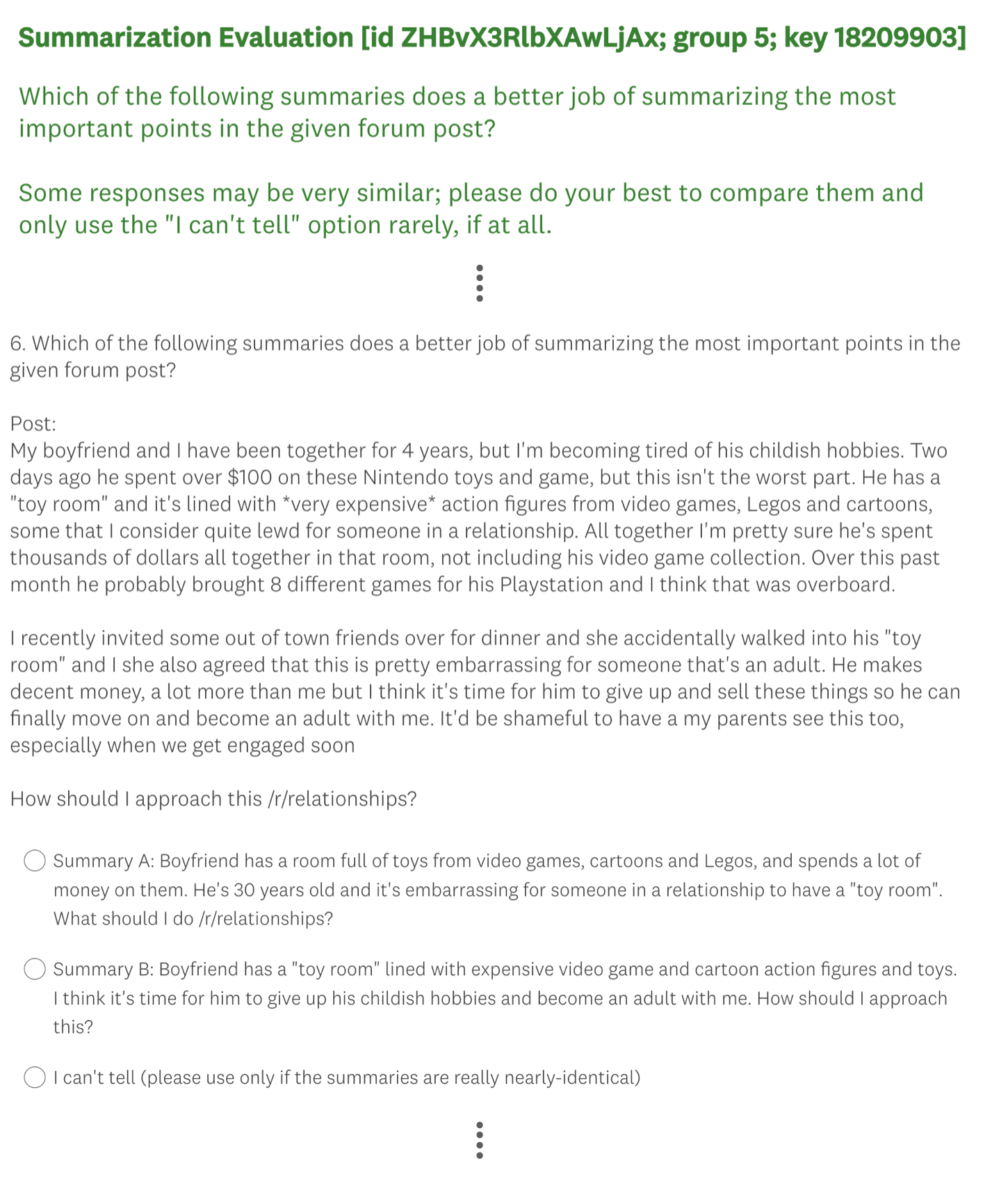}
    \caption{Layout of the survey in SurveyMonkey. Each respondent completed 25 similarly-formatted judgments.}
    \label{fig:survey}
    \vspace{-2mm}
\end{wrapfigure}

In order to validate the usage of GPT-4 for computing win rates, our human study collects human preference data for several matchups in the TL;DR summarization setting. We select three different algorithmic matchups, evaluating DPO (temp. 0.25), SFT (temp. 0.25), and PPO (temp 1.0) compared to the reference algorithm PPO (temp 0.). By selecting matchups for three unique algorithms as well as algorithms with a wide range of win rates vs the reference, we capture the similarity of human and GPT-4 win rates across the response quality spectrum. We sample 150 random comparisons of DPO vs PPO-0 and 100 random comparisons PPO-1 vs PPO-0, assigning two humans to each comparison, producing 275 judgments for DPO-PPO\footnote{One volunteer did not respond for the DPO-PPO comparison.} and 200 judgments for PPO-PPO. We sample 125 SFT comparisons, assigning a single human to each. We ignore judgments that humans labeled as ties (which amount to only about 1\% of judgments), and measure the raw agreement percentage between human A and human B (for comparisons where we have two human annotators, i.e., not SFT) as well as between each human and GPT-4.

\paragraph{Participants.} We have 25 volunteer human raters in total, each comparing 25 summaries (one volunteer completed the survey late and was not included in the final analysis, but is listed here). The raters were Stanford students (from undergrad through Ph.D.), or recent Stanford graduates or visitors, with a STEM (mainly CS) focus. See Figure~\ref{fig:survey} for a screenshot of the survey interface. We gratefully acknowledge the contribution of each of our volunteers, listed in random order:

\begin{table}[h]
\begin{tabular}{llll}
1. Gordon Chi         & 2. Virginia Adams      & 3. Max Du             & 4. Kaili Huang        \\
5. Ben Prystawski     & 6. Ioanna Vavelidou    & 7. Victor Kolev       & 8. Karel D'Oosterlinck\\
9. Ananth Agarwal     & 10. Tyler Lum          & 11. Mike Hardy        & 12. Niveditha Iyer         \\
13. Helena Vasconcelos& 14. Katherine Li       & 15. Chenchen Gu       & 16. Moritz Stephan    \\
17. Swee Kiat Lim     & 18. Ethan Chi          & 19. Kaien Yang        & 20. Ryan Chi          \\
21. Joy Yun           & 22. Abhay Singhal      & 23. Siyan Li          & 24. Amelia Hardy      \\
25. Zhengxuan Wu      &                        &                       &                       \\
\end{tabular}
\end{table}

\end{document}